\newtheorem{theorem}{Theorem}
\newtheorem{lemma}[theorem]{Lemma}
\newtheorem{corollary}[theorem]{Corollary}
\newenvironment{proofof}[1]{\begin{trivlist} \item {\bf Proof
#1:~~}}
  {\qed\end{trivlist}}
\newcommand{\lr}[1]{\left (#1\right)}
\newcommand{\lrs}[1]{\left [#1 \right]}
\newcommand{\lrc}[1]{\left \{#1 \right\}}
\newcommand{\V}{\mathbb{V}}
\newcommand{\HH}{\mathcal{H}}
\NewDocumentCommand{\1}{o}{\mathds 1{\IfValueT{#1}{\lr{#1}}}}
\NewDocumentCommand{\E}{o}{\mathbb E\IfValueT{#1}{\lrs{#1}}}
\NewDocumentCommand{\Var}{o}{\V\IfValueT{#1}{\lrs{#1}}}
\let\P\undefined
\NewDocumentCommand{\P}{o}{\mathbb P{\IfValueT{#1}{\lr{#1}}}}
\newcommand{\Vmin}{V_{\min}}
\newcommand{\dataset}[1]{#1}
\DeclareMathOperator{\MV}{MV}
\DeclareMathOperator{\KL}{KL}
\DeclareMathOperator{\kl}{kl}
\newcommand{\D}{\mathbb{D}}
\newcommand{\R}{\mathbb{R}}
\newcommand{\FO}{\operatorname{FO}}
\newcommand{\TND}{\operatorname{TND}}
\newcommand{\CMUTND}{\operatorname{CCTND}}
\newcommand{\COTND}{\operatorname{CCPBB}}
\newcommand{\CCTND}{\operatorname{CCTND}}
\newcommand{\CCPBB}{\operatorname{CCPBB}}
\newcommand{\BOUND}{\operatorname{BOUND}}
\DeclareMathOperator{\testset}{S_{test}}
\title{Chebyshev-Cantelli PAC-Bayes-Bennett Inequality for the Weighted Majority Vote}
\author{%
   Yi-Shan Wu\\
   University of Copenhagen\\
   \texttt{yswu@di.ku.dk}\\
   \And
    Andr{\'e}s R. Masegosa\\
    University of Aalborg \\
   \texttt{arma@cs.aau.dk} \\
   \And
   Stephan S. Lorenzen \\
   University of Copenhagen \\
   \texttt{lorenzen@di.ku.dk} \\
   \And
   Christian Igel \\
   University of Copenhagen \\
   \texttt{igel@di.ku.dk} \\
   \And
   Yevgeny Seldin \\
   University of Copenhagen \\
   \texttt{seldin@di.ku.dk} \\
}
\begin{document}

\maketitle

\begin{abstract}
We present a new second-order oracle bound for the expected risk of a weighted majority vote. The bound is based on a novel parametric form of the Chebyshev-Cantelli inequality (a.k.a.\ one-sided Chebyshev's), which is amenable to efficient minimization. The new form resolves the optimization challenge faced by prior oracle bounds based on the Chebyshev-Cantelli inequality, the C-bounds \citep{GLL+15}, and, at the same time, it improves on the oracle bound based on second order Markov's inequality introduced by \citet{MLIS20}. We also derive a new concentration of measure inequality, which we name PAC-Bayes-Bennett, since it combines PAC-Bayesian bounding with Bennett's inequality. We use it for empirical estimation of the oracle bound. The PAC-Bayes-Bennett inequality improves on the PAC-Bayes-Bernstein inequality of \citet{SLCB+12}.  We provide an empirical evaluation demonstrating that the new bounds can improve on the work of \citet{MLIS20}. Both the parametric form of the Chebyshev-Cantelli inequality and the PAC-Bayes-Bennett inequality may be of independent interest for the study of concentration of measure in other domains.

\end{abstract}

\section{Introduction}

Weighted majority vote is a central technique for combining predictions of multiple classifiers. It is an integral part of random forests \citep{Bre96,Bre01}, boosting \citep{FS96}, gradient boosting \citep{Fri99,friedman2001greedy,MBBF99,chen2016xgboost}, and it is also used to combine predictions of heterogeneous classifiers. It is part of the winning strategies in many machine learning competitions. The power of the majority vote is in the cancellation of errors effect: when the errors of individual classifiers are independent or anticorrelated and the error probability of individual classifiers is smaller than $0.5$, then the errors average out and the majority vote tends to outperform the individual classifiers.

Generalization bounds for weighted majority vote and theoretically-grounded approaches to weight-tuning are decades-old research topics. \citet{BK16} derived an optimal solution under the assumption of known error rates and independence of errors of individual classifiers, but neither of the two assumptions is typically satisfied in practice.

In absence of the independence assumption, the most basic result is the first order oracle bound, which is based on Markov's inequality and bounds the expected loss of $\rho$-weighted majority vote by twice the $\rho$-weighted average of expected losses of the individual classifiers. This finding is so old and basic that \citet{LST02} call it ``the folk theorem''. The $\rho$-weighted average of the expected losses is then bounded using PAC-Bayesian bounds, turning the oracle bound into an empirical bound \citep{McA98,See02,LST02}. While the translation from oracle to empirical bound is quite tight \citep{GLLM09,TIWS17}, the first order oracle bound ignores the correlation of errors, which is the main power of the majority vote. As a result, its minimization overconcentrates the weights on the best-performing classifiers, effectively reducing the majority vote to a very few or even just a single best classifier, which leads to a significant deterioration of the test error on held-out data \citep{LIS19,MLIS20}.

In order to take correlation of errors into account, \citet{LLM+07} derived second order oracle bounds, the C-bounds, which are based on the Chebyshev-Cantelli inequality. The ideas were further developed by \citet{LMR11}, \citet{GLL+15}, and \citet{LMRR17}. However, they were only able to optimize the bounds in the highly restrictive setting of binary classification with self-complemented sets of classifiers and aligned priors and posteriors \citep{GLL+15}. Several follow-up works resorted to minimization of heuristic surrogates rather than the bound itself \citep{BCRL20,VGHM21}. Furthermore, second order oracle quantities in the denominator of the oracle bounds lead to looseness in their translation to empirical bounds \citep{LIS19}.

\citet{MLIS20} proposed an alternative second-order oracle bound, the tandem bound, based on second order Markov's inequality. While the second order Markov's inequality is weaker than the Chebyshev-Cantelli inequality, the resulting bound has no oracle quantities in the denominator, which allows tight translation to an empirical bound. Additionally, \citeauthor{MLIS20} proposed an efficient procedure for minimization of their empirical bound. They have shown that minimization of the empirical bound does not lead to deterioration of the test error.

Our work can be seen as a bridge between the tandem bound and the C-bounds, and as an improvement of both. The key novelty is a new parametric form of Chebyshev-Cantelli inequality, which preserves the tightness of Chebyshev-Cantelli, but avoids oracle quantities in the denominator. This allows both efficient translation to empirical bounds and efficient minimization. We derive two new second order oracle bounds based on the new inequality, one using the tandem loss and the other using the tandem loss with an offset. For empirical estimation of the latter we derive a PAC-Bayes-Bennett inequality. The overall contributions can be summarized as follows:
\begin{enumerate}[leftmargin=*]
    \item We propose a new parametric form of the Chebyshev-Cantelli inequality, which has no variance in the denominator and preserves tightness of the original bound. The new form allows efficient minimization and empirical estimation.
    \item We propose two new second order oracle bounds for the weighted majority vote based on the new form of the Chebyshev-Cantelli inequality. The bounds have two advantages: (1) they are amenable to tight translation to empirical bounds; and (2) the resulting empirical bounds are amenable to efficient minimization.
    \item We derive a new concentration of measure inequality, which we name the PAC-Bayes-Bennett inequality. It improves on the PAC-Bayes-Bernstein inequality of \citet{SLCB+12}. We use the inequality for bounding the tandem loss with an offset.
\end{enumerate}

\section{Problem setup}
\label{sec:generalsetup}

The problem setup and notations are borrowed from \citet{MLIS20}. 

\paragraph{Multiclass classification.}
We let $S=\{(X_1,Y_1),\ldots,(X_n,Y_n)\}$ be an independent identically
distributed sample from $\cal{X}\times\cal{Y}$, drawn according to an unknown distribution $D$, where $\cal{Y}$ is finite and $\cal{X}$ is arbitrary. A hypothesis
is a function $h : \cal{X} \rightarrow \cal{Y}$, and $\cal H$ denotes a space of hypotheses. We evaluate the quality of $h$  using the zero-one loss $\ell(h(X),Y)=\1[h(X)\neq Y]$, where $\1[\cdot]$ is the indicator function. The expected loss of $h$ is denoted by $L(h) =
\mathbb{E}_{(X,Y)\sim D} [\ell(h(X),Y)]$ and the empirical loss of $h$ on a sample $S$ of size $n$ is denoted by $\hat{L}(h,S) =
\frac{1}{n} \sum_{i=1}^n \ell (h(X_i),Y_i)$. 

\paragraph{Randomized classifiers.}
A \emph{randomized classifier} (a.k.a.\ Gibbs classifier) associated with a distribution $\rho$ on $\cal{H}$, for each input $X$ randomly draws a hypothesis $h\in{\cal H}$ according to $\rho$ and predicts $h(X)$. The expected loss of a randomized classifier is given by $\mathbb{E}_{h\sim \rho} [L(h)]$ and the empirical loss by $\mathbb{E}_{h\sim\rho}[\hat{L}(h,S)]$. 
To simplify the notation we use $\E_D[\cdot]$ as a shorthand for $\E_{(X,Y)\sim D}[\cdot]$ and $\E_\rho[\cdot]$ as a shorthand for $\E_{h\sim \rho}[\cdot]$. 

\paragraph{Ensemble classifiers and majority vote.}
Ensemble classifiers predict by taking a weighted aggregation of predictions by hypotheses from ${\cal H}$. The $\rho$-weighted majority vote $\MV_\rho$ predicts $\MV_\rho (X)= \arg\max_{y\in{\cal Y}} \E_\rho[\1[h(X) = y]]
%\sum_{h\in{\cal H}\wedge (h(X)=y)} \rho(h)
$, where ties can be resolved arbitrarily. 

\section{A review of prior first and second order oracle bounds}

If majority voting makes an error, we know that at least a $\rho$-weighted half of the classifiers have made an error and, therefore, $\ell(\MV_\rho(X),Y) \leq \1[\E_\rho[\1[h(X)\neq Y]] \geq 0.5]$. This observation leads to the well-known first order oracle bound for the loss of weighted majority vote.
\begin{theorem}[First Order Oracle Bound]
\label{thm:first-order}
\[
L(\MV_\rho)\leq 2\E_\rho[L(h)].
\]
\end{theorem}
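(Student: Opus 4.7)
The plan is to combine the pointwise observation already stated immediately before the theorem with a one-line application of Markov's inequality. The only ingredients needed are: (i) the reduction of the majority-vote loss at a fixed $(X,Y)$ to an indicator of the event $\lrc{\E_\rho[\1[h(X)\neq Y]] \geq 1/2}$, and (ii) the trivial bound $\1[Z \geq a] \leq Z/a$ for any nonnegative $Z$ and $a>0$.

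First I would restate the pointwise inequality $\ell(\MV_\rho(X),Y) \leq \1[\E_\rho[\1[h(X)\neq Y]] \geq 1/2]$, whose justification is a counting argument: if $\MV_\rho(X) \neq Y$, then by definition of the arg-max the $\rho$-mass of hypotheses predicting the (incorrect) winning label is at least as large as the $\rho$-mass predicting $Y$, forcing the total $\rho$-mass of hypotheses that err on $(X,Y)$ to be at least $1/2$. Second, I would apply Markov's inequality to the nonnegative random quantity $Z(X,Y) = \E_\rho[\1[h(X)\neq Y]]$ at threshold $a=1/2$, yielding the pointwise bound $\ell(\MV_\rho(X),Y) \leq 2\,\E_\rho[\1[h(X)\neq Y]]$. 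Third, I would take $\E_D[\cdot]$ on both sides, swap the two expectations by Fubini (permissible since the integrand lies in $[0,1]$), and recognize the inner $\E_D[\1[h(X)\neq Y]]$ as $L(h)$, which gives $L(\MV_\rho) \leq 2\,\E_\rho[L(h)]$ as claimed.

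There is no real obstacle here: the pointwise halving observation does all the conceptual work, and everything that follows is a textbook Markov bound plus a Fubini swap. The proof is essentially three lines, and the only thing worth being a little careful about is recording that the use of Markov is valid because $\E_\rho[\1[h(X)\neq Y]] \in [0,1]$ is nonnegative (indeed bounded), so the swap of expectations and the indicator-to-ratio bound both apply without any integrability caveats.
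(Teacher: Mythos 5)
Your proposal is correct and follows essentially the same route as the paper: the pointwise halving observation, Markov's inequality applied to $Z=\E_\rho[\1[h(X)\neq Y]]$ at threshold $1/2$, and the exchange of $\E_D$ and $\E_\rho$. The only cosmetic difference is that you apply the indicator-to-ratio bound pointwise before integrating over $D$, whereas the paper first passes to $\P[Z\geq 0.5]$ and then invokes Markov, which is the same calculation in a different order.
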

\begin{proof}
We have $L(\MV_\rho) = \E_D[\ell(\MV_\rho(X),Y)] \leq \P[\E_\rho[\1[h(X)\neq Y]] \geq 0.5]$.  By applying Markov's inequality to random variable $Z = \E_\rho[\1[h(X)\neq Y]]$ we have: 
%\begin{align*}
\[
L(\MV_\rho) \leq \P[\E_\rho[\1[h(X)\neq Y]] \geq 0.5]
\leq 2\E_D[\E_\rho[\1[h(X)\neq Y]]]
= 2\E_\rho[L(h)].
\qedhere
\]
%\qedhere
%\end{align*}
\end{proof}
PAC-Bayesian analysis can be used to bound $\E_\rho[L(h)]$ in Theorem~\ref{thm:first-order} in terms of $\E_\rho[\hat L(h,S)]$, thus turning the oracle bound into an empirical one. The disadvantage of the first order approach is that $\E_\rho[L(h)]$ ignores correlations of predictions, which is the main power of the majority vote.

\citet{MLIS20} have used second order Markov's inequality, by which for a non-negative random variable $Z$ and $\varepsilon > 0$
\[
\P[Z\geq \varepsilon] = \P[Z^2 \geq \varepsilon^2] \leq \frac{\E[Z^2]}{\varepsilon^2}.
\]
For pairs of hypotheses $h$ and $h'$ they have defined the \emph{tandem loss} $\ell(h(X),h'(X),Y) = \1[h(X)\neq Y \wedge h'(X) \neq Y]=\1[h(X)\neq Y]\1[h'(X) \neq Y]$, also termed \emph{joint error} by \citet{LLM+07}, which counts an error only if both $h$ and $h'$ err on a sample $(X,Y)$. The corresponding \emph{expected tandem loss} is defined by
\[
L(h,h') = \E_D[\1[h(X)\neq Y]\1[h'(X)\neq Y]].
\]
\citet{LLM+07} and \citet{MLIS20} have shown that expectation of the second moment of the weighted loss equals expectation of the tandem loss. Using $\rho^2$ as a shorthand for the product distribution $\rho \times \rho$ over $\HH \times \HH$ and $\E_{\rho^2}[L(h,h')]$ as a shorthand for $\E_{h\sim\rho,h'\sim\rho}[L(h,h')]$, the result is the following.
\begin{lemma}[\citealp{MLIS20}]
\label{lem:tandemloss}
In multiclass classification
\[
\E_D[\E_\rho[\1[h(X) \neq Y]]^2] = \E_{\rho^2}[L(h,h')].
\]
\end{lemma}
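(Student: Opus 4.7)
The plan is to prove the identity by rewriting the square of an expectation as an expectation under the product measure and then applying Fubini to swap the order of integration.

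First I would fix a sample $(X,Y)$ and note that $\E_\rho[\1[h(X)\neq Y]]$ is just a (data-dependent) scalar, so its square can be written as a product of two independent copies of the same expectation, one indexed by $h$ and one by $h'$. Concretely,
\[
\E_\rho[\1[h(X)\neq Y]]^2
= \E_\rho[\1[h(X)\neq Y]] \cdot \E_\rho[\1[h'(X)\neq Y]]
= \E_{\rho^2}\!\left[\1[h(X)\neq Y]\,\1[h'(X)\neq Y]\right],
\]
where the last equality is just the definition of the product measure $\rho^2 = \rho \times \rho$ on $\HH \times \HH$.

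Next I would take $\E_D$ of both sides and interchange $\E_D$ with $\E_{\rho^2}$. Since the integrand is the bounded nonnegative function $\1[h(X)\neq Y]\1[h'(X)\neq Y] \in \{0,1\}$, Fubini's theorem applies immediately and yields
\[
\E_D\!\left[\E_\rho[\1[h(X)\neq Y]]^2\right]
= \E_{\rho^2}\!\left[\E_D[\1[h(X)\neq Y]\,\1[h'(X)\neq Y]]\right]
= \E_{\rho^2}[L(h,h')],
\]
where the last step uses the definition of the tandem loss $L(h,h')$ given just above the lemma.

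There is essentially no obstacle here; the only conceptual point is the bookkeeping move of turning a single-variable square into a product-measure expectation by renaming the integration variable. Everything else is Fubini on a bounded indicator and the stated definitions of $L(h,h')$ and $\rho^2$.
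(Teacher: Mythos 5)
Your proof is correct and matches the standard argument: the paper itself cites \citet{MLIS20} for this lemma rather than reproving it, but the displayed computation of $\E[(Z-\mu)^2]$ just before Theorem~\ref{thm:MVBen} uses exactly your two steps (square of an expectation as a product-measure expectation over $(h,h')\sim\rho^2$, then swapping $\E_D$ with $\E_{\rho^2}$), of which the lemma is the $\mu=0$ special case. Nothing is missing.
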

By combining second order Markov's inequality with Lemma~\ref{lem:tandemloss}, \citeauthor{MLIS20} have shown the following result.
\begin{theorem}[\citealp{MLIS20}]
\label{thm:tandembound}
In multiclass classification
\[
L(\MV_\rho) \leq 4\E_{\rho^2}[L(h,h')].
\]
\end{theorem}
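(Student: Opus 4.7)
The plan is to chain together three ingredients already supplied in the excerpt: the majority-vote pointwise domination used in the proof of Theorem~\ref{thm:first-order}, the second order Markov inequality as stated just before Lemma~\ref{lem:tandemloss}, and Lemma~\ref{lem:tandemloss} itself. So the proof is essentially a one-line substitution exercise, and I do not expect a real obstacle; the only thing to get right is matching the threshold $\varepsilon = 1/2$ with the factor $1/\varepsilon^2 = 4$ that appears in the statement.

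First I would recall that for any $(X,Y)$, if $\MV_\rho$ misclassifies $(X,Y)$ then by definition at least half of the $\rho$-mass of hypotheses must also misclassify $(X,Y)$, so
\[
\ell(\MV_\rho(X),Y) \;\le\; \1\!\left[\E_\rho[\1[h(X)\neq Y]] \ge \tfrac{1}{2}\right].
\]
Taking expectation over $D$ gives $L(\MV_\rho) \le \P\!\left[\E_\rho[\1[h(X)\neq Y]] \ge \tfrac{1}{2}\right]$. This is the same starting point used for the first order bound; the only change is that we will now bound the right-hand side via the second order Markov inequality rather than the ordinary Markov inequality.

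Next I would set $Z = \E_\rho[\1[h(X)\neq Y]]$, which is a nonnegative random variable in $[0,1]$, and apply the second order Markov inequality $\P[Z \ge \varepsilon] \le \E[Z^2]/\varepsilon^2$ with $\varepsilon = 1/2$. This yields
\[
L(\MV_\rho) \;\le\; \P\!\left[Z \ge \tfrac{1}{2}\right] \;\le\; 4\,\E_D\!\left[\E_\rho[\1[h(X)\neq Y]]^2\right].
\]
Finally, Lemma~\ref{lem:tandemloss} rewrites the second moment inside the expectation as the expected tandem loss, $\E_D[\E_\rho[\1[h(X)\neq Y]]^2] = \E_{\rho^2}[L(h,h')]$, so substituting this in gives $L(\MV_\rho) \le 4\,\E_{\rho^2}[L(h,h')]$, which is precisely the claim.
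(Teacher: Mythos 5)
Your proof is correct and follows exactly the route the paper indicates for this result: start from $L(\MV_\rho)\leq \P[\E_\rho[\1[h(X)\neq Y]]\geq 0.5]$, apply second order Markov's inequality with $\varepsilon=1/2$, and convert the second moment to the expected tandem loss via Lemma~\ref{lem:tandemloss}. Nothing is missing.
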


\citet{LLM+07} have used the Chebyshev-Cantelli inequality to derive a different form of a second order oracle bound. We use $\Var[Z]$ to denote the variance of a random variable $Z$ in the statement of Chebyshev-Cantelli inequality.
\begin{theorem}[Chebyshev-Cantelli inequality]
\label{thm:CC}
For $\varepsilon > 0$
\[
\P[Z - \E[Z] \geq \varepsilon] \leq \frac{\Var[Z]}{\varepsilon^2+\Var[Z]}.
\]
\end{theorem}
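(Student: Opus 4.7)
The plan is to use the standard ``shift-and-square'' trick that reduces the one-sided tail bound to an application of Markov's inequality, and then to optimize over the shift parameter. Specifically, let $W = Z - \E[Z]$, so that $\E[W] = 0$ and $\Var[W] = \Var[Z]$, and we aim to bound $\P[W \geq \varepsilon]$.

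First, for any $t > 0$, I would observe that on the event $\{W \geq \varepsilon\}$ we have $W + t \geq \varepsilon + t > 0$, and hence $(W + t)^2 \geq (\varepsilon + t)^2$. This inclusion of events gives
\[
\P[W \geq \varepsilon] \leq \P[(W+t)^2 \geq (\varepsilon + t)^2].
\]
Since $(W+t)^2$ is nonnegative, Markov's inequality applied to it yields
\[
\P[W \geq \varepsilon] \leq \frac{\E[(W+t)^2]}{(\varepsilon+t)^2} = \frac{\Var[Z] + t^2}{(\varepsilon+t)^2},
\]
using $\E[W] = 0$ to expand $\E[(W+t)^2] = \Var[Z] + t^2$.

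Next I would minimize the right-hand side over $t > 0$. Differentiating the ratio and setting the derivative to zero reduces to $t\varepsilon = \Var[Z]$, giving the optimal choice $t^\ast = \Var[Z]/\varepsilon$. Substituting $t = t^\ast$ and simplifying produces the claimed bound $\Var[Z] / (\varepsilon^2 + \Var[Z])$.

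The only mild subtlety, and the step I would be most careful with, is the direction of the first inequality: the event $\{(W+t)^2 \geq (\varepsilon+t)^2\}$ is strictly larger than $\{W \geq \varepsilon\}$ because it also contains the left tail $\{W + t \leq -(\varepsilon+t)\}$, so the bound is not tight for each fixed $t$; it is the free optimization over $t$ that recovers tightness. If $\Var[Z] = 0$, the statement is trivial (take $t \to 0^+$), so I would treat that edge case separately or simply note that the same formula continues to hold by a limiting argument.
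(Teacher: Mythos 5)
Your proof is correct and is essentially the paper's own route: Theorem~\ref{thm:CCmu} is exactly your shift-and-square step (with $\mu=-t$ applied to the centered variable), and the paper likewise obtains Chebyshev--Cantelli by substituting the optimal shift $\mu^*=\E[Z]-\Var[Z]/(\varepsilon-\E[Z])$, which matches your $t^*=\Var[Z]/\varepsilon$. The computation of the optimizer and the final simplification check out, so nothing further is needed.
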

Theorem~\ref{thm:CC} together with Lemma~\ref{lem:tandemloss} leads to the following result, known as the \emph{C-bound}.
\begin{theorem}[\citealp{LLM+07,MLIS20}]
\label{thm:Cbound}
If $\E_\rho[L(h)] \leq \frac{1}{2}$, then
\[
L(\MV_\rho)\leq \frac{\E_{\rho^2}[L(h,h')] - \E_\rho[L(h)]^2}{\frac{1}{4} + \E_{\rho^2}[L(h,h')] - \E_\rho[L(h)]}.
\]
\end{theorem}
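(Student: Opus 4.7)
The plan is to apply the Chebyshev-Cantelli inequality (Theorem~\ref{thm:CC}) to the random variable $Z = \E_\rho[\1[h(X)\neq Y]]$ with threshold $1/2$, and then simplify using Lemma~\ref{lem:tandemloss}. As in the first-order proof, the starting point is the observation $L(\MV_\rho) \le \P[Z \ge 1/2]$.

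First I would rewrite this probability in a form suited to Chebyshev-Cantelli. Since $\E[Z] = \E_D[\E_\rho[\1[h(X)\neq Y]]] = \E_\rho[L(h)]$ by linearity of expectation (and Fubini), we have
\[
\P[Z \ge 1/2] = \P\!\left[Z - \E[Z] \ge \tfrac{1}{2} - \E_\rho[L(h)]\right],
\]
so setting $\varepsilon = \tfrac{1}{2} - \E_\rho[L(h)]$ gives a nonnegative deviation threshold; the assumption $\E_\rho[L(h)] \le 1/2$ is precisely what is needed so that Chebyshev-Cantelli applies (with strict inequality giving $\varepsilon>0$, and the boundary case handled trivially or by a continuity argument). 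Applying Theorem~\ref{thm:CC} then yields
\[
L(\MV_\rho) \le \frac{\Var[Z]}{\varepsilon^2 + \Var[Z]}.
\]

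Next I would compute $\Var[Z]$. By Lemma~\ref{lem:tandemloss}, $\E[Z^2] = \E_{\rho^2}[L(h,h')]$, hence
\[
\Var[Z] = \E_{\rho^2}[L(h,h')] - \E_\rho[L(h)]^2.
\]
Plugging in and expanding the denominator,
\[
\varepsilon^2 + \Var[Z] = \bigl(\tfrac{1}{2} - \E_\rho[L(h)]\bigr)^2 + \E_{\rho^2}[L(h,h')] - \E_\rho[L(h)]^2,
\]
the cross term $-\E_\rho[L(h)]^2$ cancels the $\E_\rho[L(h)]^2$ coming from expanding the square, leaving $\tfrac{1}{4} - \E_\rho[L(h)] + \E_{\rho^2}[L(h,h')]$. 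Substituting gives exactly the claimed bound.

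There is no real obstacle; the only subtle point is ensuring the Chebyshev-Cantelli inequality is applicable, which is exactly why the hypothesis $\E_\rho[L(h)] \le 1/2$ appears in the statement. The rest is algebraic simplification driven by the identity $\E[Z^2] = \E_{\rho^2}[L(h,h')]$ from Lemma~\ref{lem:tandemloss}.
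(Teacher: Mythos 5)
Your proof is correct and follows exactly the route the paper indicates for this result: apply Chebyshev--Cantelli (Theorem~\ref{thm:CC}) to $Z = \E_\rho[\1[h(X)\neq Y]]$ with $\varepsilon = \tfrac{1}{2} - \E_\rho[L(h)]$, use Lemma~\ref{lem:tandemloss} to identify $\E[Z^2]$ with the expected tandem loss, and simplify. The algebra checks out and your handling of the boundary case $\E_\rho[L(h)] = \tfrac{1}{2}$ is an appropriate, if minor, refinement.
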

\citeauthor{MLIS20} have shown that the  Chebyshev-Cantelli inequality is always at least as tight as  second order Markov's inequality (below we provide an alternative and more intuitive proof of this fact) and, therefore, the oracle bound in Theorem~\ref{thm:Cbound} is always at least as tight as the oracle bound in Theorem~\ref{thm:tandembound}. However, the presence of $\E_{\rho^2}[L(h,h')]$ and $\E_\rho[L(h)]$ in the denominator make empirical estimation and optimization of the bound in Theorem~\ref{thm:Cbound} impractical, and Theorem~\ref{thm:tandembound} was the only practically applicable second order bound so far.

\section{Main Contributions}

We present three main contributions: (1) a new form of the Chebyshev-Cantelli inequality, which is convenient for optimization; (2) an application of the inequality to the analysis of weighted majority vote; and (3) a PAC-Bayes-Bennett inequality, which is used to bound the risk with an offset in the bound for weighted majority vote. We start with the new form of Chebyshev-Cantelli inequality, which can be seen as a refinement of second order Markov's inequality or as an intermediate step in the proof of the Chebyshev-Cantelli inequality.
\begin{theorem}
\label{thm:CCmu}
For any $\varepsilon > 0$ and all $\mu < \varepsilon$
\[
\P[Z \geq \varepsilon] \leq \frac{\E[(Z-\mu)^2]}{(\varepsilon - \mu)^2}.
\]
\end{theorem}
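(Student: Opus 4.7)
The plan is to reduce the inequality to a direct application of Markov's inequality after a shift-and-square transformation. The key observation is that the condition $\mu < \varepsilon$ ensures $\varepsilon - \mu > 0$, so the event $\{Z \geq \varepsilon\}$ coincides with $\{Z - \mu \geq \varepsilon - \mu\}$ where both sides of the inner inequality are situated so that squaring is monotone on the relevant region.

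Concretely, first I would rewrite $\P[Z \geq \varepsilon] = \P[Z - \mu \geq \varepsilon - \mu]$. Next, I would use the set inclusion $\{Z - \mu \geq \varepsilon - \mu\} \subseteq \{(Z-\mu)^2 \geq (\varepsilon-\mu)^2\}$, which is valid precisely because $\varepsilon - \mu > 0$: any $\omega$ with $Z(\omega) - \mu \geq \varepsilon - \mu > 0$ satisfies $(Z(\omega)-\mu)^2 \geq (\varepsilon-\mu)^2$. Finally, since $(Z-\mu)^2$ is a non-negative random variable and $(\varepsilon-\mu)^2 > 0$, Markov's inequality gives
\[
\P\bigl[(Z-\mu)^2 \geq (\varepsilon-\mu)^2\bigr] \leq \frac{\E[(Z-\mu)^2]}{(\varepsilon-\mu)^2},
\]
and chaining these three steps yields the claim.

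There is no real obstacle here; the only subtle point is that the set inclusion step can fail if $\varepsilon - \mu \leq 0$ (since then one could have $Z - \mu \geq \varepsilon - \mu$ while $(Z-\mu)^2 < (\varepsilon-\mu)^2$), which is exactly why the hypothesis $\mu < \varepsilon$ is needed. It is worth noting that the statement makes no sign or integrability assumption on $Z$ beyond $\E[(Z-\mu)^2]$ being well-defined, and indeed no such assumption is required: the shift absorbs any sign issue, and the resulting squared variable is automatically non-negative, which is what makes Markov applicable. This is why the result can be viewed both as a refinement of second order Markov (recovering it at $\mu = 0$ when $Z \geq 0$) and as the key intermediate step that, when optimized over $\mu$, yields the Chebyshev--Cantelli bound.
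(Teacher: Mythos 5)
Your proof is correct and follows exactly the paper's argument: shift by $\mu$, use the set inclusion $\{Z-\mu \geq \varepsilon-\mu\}\subseteq\{(Z-\mu)^2\geq(\varepsilon-\mu)^2\}$ (valid since $\varepsilon-\mu>0$), and apply Markov's inequality to the non-negative variable $(Z-\mu)^2$. Your explicit justification of why the hypothesis $\mu<\varepsilon$ is needed is a welcome elaboration of a step the paper leaves implicit, but the route is the same.
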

\begin{proof}
\[
\P[Z\geq \varepsilon] = \P[Z - \mu \geq \varepsilon - \mu] \leq \P[(Z-\mu)^2 \geq (\varepsilon - \mu)^2]
\leq \frac{\E[(Z-\mu)^2]}{(\varepsilon - \mu)^2}.
\qedhere
\]
\end{proof}
The inequality can also be written as
\begin{equation}
\label{eq:CC2}
\P[Z \geq \varepsilon] \leq \frac{\E[(Z-\mu)^2]}{(\varepsilon - \mu)^2} = \frac{\E[Z^2] - 2\mu\E[Z]+\mu^2}{(\varepsilon - \mu)^2}.
\end{equation}
The bound is minimized by $\mu^* = \E[Z]-\frac{\Var[Z]}{\varepsilon-\E[Z]}$, which can be verified by taking a derivative of the bound with respect to $\mu$. Note that $\mu^*$ can take negative values. Substitution of $\mu^*$ into the bound and simple algebraic manipulations recover the Chebyshev-Cantelli inequality, whereas $\mu=0$ recovers  second order Markov's inequality. The main advantage of Theorem~\ref{thm:CCmu} over the  Chebyshev-Cantelli inequality is ease of estimation and optimization due to absence of the variance term in the denominator.

Equation \eqref{eq:CC2} leads to two new  second order oracle bounds for the weighted majority vote, given in Theorems~\ref{thm:MV} and \ref{thm:MVBen}.
\begin{theorem}
\label{thm:MV}
In multiclass classification, for all $\rho$ and all $\mu<0.5$
\[
L(\MV_\rho)\leq \frac{\E_{\rho^2}[L(h,h')] - 2\mu\E_\rho[L(h)]+\mu^2}{(0.5 - \mu)^2}.
\]
\end{theorem}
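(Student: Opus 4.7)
The plan is to combine the standard majority-vote reduction with the new parametric Chebyshev-Cantelli inequality of Theorem~\ref{thm:CCmu}, mirroring the structure already used in the proofs of Theorems~\ref{thm:first-order} and~\ref{thm:Cbound}. Concretely, define the random variable $Z = \E_\rho[\1[h(X) \neq Y]]$ on the draw of $(X,Y)\sim D$. As noted at the start of Section~3, whenever $\MV_\rho$ errs at least a $\rho$-weighted half of the hypotheses err, so
\[
L(\MV_\rho) \;\leq\; \P[Z \geq 0.5].
\]
This is the only place where the majority-vote structure is used; everything that follows is an analysis of $Z$.

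Next, I would apply Theorem~\ref{thm:CCmu} to $Z$ with $\varepsilon = 0.5$ and any $\mu < 0.5$, together with the equivalent form in equation~\eqref{eq:CC2}, to get
\[
\P[Z \geq 0.5] \;\leq\; \frac{\E_D[(Z-\mu)^2]}{(0.5-\mu)^2} \;=\; \frac{\E_D[Z^2] - 2\mu\,\E_D[Z] + \mu^2}{(0.5-\mu)^2}.
\]

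The remaining step is to translate $\E_D[Z^2]$ and $\E_D[Z]$ back into the quantities appearing in the theorem. For the second moment, Lemma~\ref{lem:tandemloss} immediately gives $\E_D[Z^2] = \E_{\rho^2}[L(h,h')]$. For the first moment, swapping the two expectations (Fubini, since the integrand is nonnegative and bounded) yields
\[
\E_D[Z] \;=\; \E_D[\E_\rho[\1[h(X)\neq Y]]] \;=\; \E_\rho[\E_D[\1[h(X)\neq Y]]] \;=\; \E_\rho[L(h)].
\]
Substituting these two identities produces exactly the bound in the theorem statement.

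There is no real obstacle here: the new inequality in Theorem~\ref{thm:CCmu} is designed precisely to slot into this argument in place of second order Markov's inequality (used for Theorem~\ref{thm:tandembound}) or the original Chebyshev-Cantelli inequality (used for Theorem~\ref{thm:Cbound}). The only mild subtlety is the constraint $\mu < 0.5$ coming from Theorem~\ref{thm:CCmu}, but since $\mu$ is a free parameter this is harmless; for $\mu \geq 0.5$ the statement would be vacuous anyway because the denominator would not correspond to a valid application of the inequality.
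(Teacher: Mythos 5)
Your proof is correct and follows exactly the paper's argument: set $Z = \E_\rho[\1[h(X)\neq Y]]$, bound $L(\MV_\rho) \leq \P[Z\geq 0.5]$, apply equation~\eqref{eq:CC2} with $\varepsilon=0.5$, and identify $\E[Z^2]=\E_{\rho^2}[L(h,h')]$ via Lemma~\ref{lem:tandemloss} and $\E[Z]=\E_\rho[L(h)]$ by exchanging expectations. Your added remark that the implicit constraint $\mu<0.5$ is harmless is a sensible clarification the paper leaves tacit.
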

\begin{proof}
As in the previous section, we take $Z = \E_\rho[\1[h(X)\neq Y]]$, so that $L(\MV_\rho)\leq \P(Z\geq 0.5)$. The result follows by \eqref{eq:CC2} and the calculations of $\E[Z^2]$ and $\E[Z]$ from the previous section. Note that the result is a deterministic statement.
\end{proof}
For $\mu=0$, Theorem~\ref{thm:MV} recovers Theorem~\ref{thm:tandembound}, but if $\mu^*=\E_\rho[L(h)]-\frac{\E_{\rho^2}[L(h,h')]-\E[L(h)]^2}{0.5 - \E_\rho[L(h)]} \neq 0$, then substitution of $\mu^*$ into the theorem yields a tighter oracle bound. At the same time, substitution of $\mu^*$ recovers Theorem~\ref{thm:Cbound}, but the great advantage of Theorem~\ref{thm:MV} is that the bound allows easy empirical estimation and optimization with respect to $\rho$, due to absence of $\E_{\rho^2}[L(h,h')]$ and $\E_\rho[L(h)]$ in the denominator. Thus, Theorem~\ref{thm:MV} has the oracle tightness of Theorem~\ref{thm:Cbound} and the ease of estimation and optimization of Theorem~\ref{thm:tandembound}. The oracle quantities $\E_{\rho^2}[L(h,h')]$ and $\E_\rho[L(h)]$ can be bounded using PAC-Bayes-kl or PAC-Bayes-$\lambda$ inequalities, as discussed in the next section.

In order to present the second oracle bound we introduce a new quantity. For a pair of hypotheses $h$ and $h'$ and a constant $\mu$, we define \emph{tandem loss with $\mu$-offset}, for brevity \emph{$\mu$-tandem loss}, as
\begin{equation}
\label{eq:ell-mu}    
    \ell_\mu(h(X),h'(X),Y) =  (\1[h(X)\neq Y] - \mu)(\1[h'(X)\neq Y] - \mu).
\end{equation}
Note that it can take negative values. We denote its expectation by 
\[
L_{\mu}(h,h') = \E_D[\ell_\mu(h(X),h'(X),Y)] = \E_D[(\1[h(X)\neq Y] - \mu)(\1[h'(X)\neq Y] - \mu)].
\] 
With $Z = \E_\rho[\1[h(X)\neq Y]]$ as before, we have
\begin{align*}
    \E[(Z-\mu)^2] &= \E_D[(\E_\rho[(\1[h(X)\neq Y] - \mu)])^2]% = \E_D[\E_{\rho^2}[(\1[h(X)\neq Y] - \mu)(\1[h'(X)\neq Y] - \mu)]] 
    \\
    &= \E_{\rho^2}[\E_D[(\1[h(X)\neq Y] - \mu)(\1[h'(X)\neq Y] - \mu)]] 
    = \E_{\rho^2}[L_{\mu}(h,h')].
\end{align*}

Now we present our second oracle bound.
\begin{theorem}
\label{thm:MVBen}
In multiclass classification, for all $\rho$ and all $\mu<0.5$
\[
L(\MV_\rho)\leq \frac{\E_{\rho^2}[L_\mu(h,h')]}{(0.5 - \mu)^2}.
\]
\end{theorem}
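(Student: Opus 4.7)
The plan is to run the same reduction used in the proof of Theorem~\ref{thm:MV}, but to apply Theorem~\ref{thm:CCmu} directly and then to substitute the identity $\E[(Z-\mu)^2] = \E_{\rho^2}[L_\mu(h,h')]$ that was derived immediately before the theorem statement. Concretely, set $Z = \E_\rho[\1[h(X)\neq Y]]$. Since $\MV_\rho$ can err on $(X,Y)$ only if the $\rho$-weighted mass of erring hypotheses is at least $1/2$, we have $L(\MV_\rho) = \E_D[\ell(\MV_\rho(X),Y)] \leq \P[Z \geq 0.5]$, exactly as in the proofs of Theorems~\ref{thm:first-order} and \ref{thm:MV}.

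Next, I would invoke Theorem~\ref{thm:CCmu} with $\varepsilon = 0.5$ (and the implicit requirement $\mu < 0.5$, which is what makes the bound in the theorem statement meaningful), to obtain
\[
\P[Z \geq 0.5] \leq \frac{\E[(Z-\mu)^2]}{(0.5-\mu)^2}.
\]
Finally, substitute the identity $\E[(Z-\mu)^2] = \E_{\rho^2}[L_\mu(h,h')]$ that was derived just above the theorem. That identity itself comes from expanding $Z-\mu = \E_\rho[\1[h(X)\neq Y] - \mu]$, writing the square as a double expectation over an independent copy $h' \sim \rho$, and using Fubini to push $\E_D$ inside $\E_{\rho^2}$, which recovers the definition \eqref{eq:ell-mu} of $\ell_\mu$. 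Plugging in yields exactly the claimed bound.

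There is essentially no serious obstacle here: the statement is a direct composition of Theorem~\ref{thm:CCmu}, the basic majority-vote-to-randomized-classifier reduction, and the already-recorded identity. The only points worth flagging in the write-up are the requirement $\mu < 0.5$ (inherited from Theorem~\ref{thm:CCmu}) and the fact that, although $\ell_\mu$ can be negative, the argument $\P[Z-\mu \geq \varepsilon-\mu] \leq \P[(Z-\mu)^2 \geq (\varepsilon-\mu)^2]$ inside Theorem~\ref{thm:CCmu} remains valid because $\varepsilon - \mu > 0$. This sign subtlety will become the genuine technical issue later when $L_\mu(h,h')$ has to be estimated from data, since standard PAC-Bayes tools for non-negative losses no longer apply directly; that is precisely where the PAC-Bayes-Bennett inequality announced in the introduction will have to do the work.
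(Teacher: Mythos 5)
Your proposal is correct and follows exactly the paper's own argument: the paper's proof of Theorem~\ref{thm:MVBen} is precisely the composition of the reduction $L(\MV_\rho)\leq\P[Z\geq 0.5]$, Theorem~\ref{thm:CCmu} with $\varepsilon=0.5$, and the identity $\E[(Z-\mu)^2]=\E_{\rho^2}[L_\mu(h,h')]$ derived just above the statement. Your additional remarks on the requirement $\mu<0.5$ and on the sign of $\ell_\mu$ are accurate and consistent with how the paper handles these points later.
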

\begin{proof}
The result follows by Theorem~\ref{thm:CCmu} and the calculation above. Note that the inequality is a deterministic statement.
\end{proof}
In order to discuss the advantage of Theorem~\ref{thm:MVBen}, we define the variance of the $\mu$-tandem loss 
\[
\Var_\mu(h,h') = \E_D[\lr{(\1[h(X)\neq Y] - \mu)(\1[h'(X)\neq Y] - \mu) - L_\mu(h,h')}^2].
\]
If the variance of the $\mu$-tandem loss is small, we can use Bernstein-type inequalities to obtain tighter estimates compared to kl-type inequalities. 

We bound the $\mu$-tandem loss using our next contribution, the PAC-Bayes-Bennett inequality, which improves on the PAC-Bayes-Bernstein inequality derived by \citet{SLCB+12} and may be of independent interest. The inequality holds for any loss function with bounded length of the range, we use $\tilde \ell$ and matching tilde-marked quantities to distinguish it from the zero-one loss $\ell$. We let $\tilde L(h)=\E_D[\tilde \ell(h(X),Y)]$ and $\tilde \Var(h) = \E_D[(\tilde \ell(h(X),Y) - \tilde L(h))^2]$ be the expected tilde-loss of $h$ and its variance and let $\hat {\tilde L}(h,S) = \frac{1}{n}\sum_{i=1}^n \tilde \ell(h(X_i),Y_i)$ be the empirical tilde-loss of $h$ on a sample $S$.
\begin{theorem}[PAC-Bayes-Bennett inequality]
\label{thm:PBBennett}
Let $\tilde \ell(\cdot,\cdot)$ be an arbitrary loss function taking values in an interval of length $b$, and assume that $\tilde \V(h)$ is finite for all $h$. Let $\phi(x) = e^x - x - 1$. Then for any distribution $\pi$ on $\HH$ that is independent of $S$ and any $\gamma > 0$ and $\delta\in(0,1)$, with probability at least $1-\delta$ over a random draw of $S$, for all distributions $\rho$ on $\HH$ simultaneously:
\[
\E_\rho[\tilde L(h)] \leq \E_\rho[\hat{\tilde L}(h,S)] + \frac{\phi(\gamma b)}{\gamma b^2}\E_\rho[\tilde \Var(h)] + \frac{\KL(\rho\|\pi)+\ln \frac{1}{\delta}}{\gamma n}.
\]
\end{theorem}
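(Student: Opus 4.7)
The plan is the usual PAC-Bayes recipe with a Bennett flavour: (i) derive a Bennett-type exponential moment inequality for a single fixed $h$, (ii) integrate against the prior and swap with $\E_S$ by Fubini, (iii) apply Markov's inequality, and (iv) change measure from $\pi$ to $\rho$ via the Donsker--Varadhan inequality $\E_\rho[f(h)] \leq \KL(\rho\|\pi) + \ln\E_\pi[e^{f(h)}]$. The improvement over the PAC-Bayes-Bernstein bound of \citet{SLCB+12} will come entirely from step (i), where the Bernstein MGF bound is replaced by the tighter Bennett one, available under one-sided boundedness of the loss.

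For step (i) I would fix $h$ and start from the classical pointwise inequality
\[
e^{\gamma y} - 1 - \gamma y \leq \frac{\phi(\gamma b)}{b^2}\,y^2 \qquad \text{for all } y \leq b \text{ and } \gamma > 0,
\]
which follows because $y \mapsto (e^{\gamma y}-1-\gamma y)/y^2$ is non-decreasing on $\R$ (taking the value $\gamma^2/2$ at $y=0$ by continuity). Applied to the centred i.i.d.\ variables $Y_i = \tilde L(h) - \tilde\ell(h(X_i),Y_i)$, which have mean zero and variance $\tilde\V(h)$ and satisfy $Y_i \leq b$ in view of the hypothesis on $\tilde\ell$, taking expectations and using $1+u \leq e^u$ gives $\E_D\lrs{e^{\gamma Y_i}} \leq \exp\!\lr{\phi(\gamma b)\tilde\V(h)/b^2}$. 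Multiplying across $i=1,\dots,n$ by independence of the sample then yields
\[
\E_S\lrs{\exp\!\lr{\gamma n\lr{\tilde L(h) - \hat{\tilde L}(h,S)} - \frac{n\phi(\gamma b)}{b^2}\tilde\V(h)}} \leq 1.
\]

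The remaining steps are routine. I would integrate against the $S$-independent prior $\pi$ and swap with $\E_S$ by Fubini to obtain $\E_S[\E_\pi[\,\cdot\,]] \leq 1$, then apply Markov's inequality to get, with probability at least $1-\delta$ over $S$,
\[
\E_\pi\lrs{\exp\!\lr{\gamma n\lr{\tilde L(h) - \hat{\tilde L}(h,S)} - \frac{n\phi(\gamma b)}{b^2}\tilde\V(h)}} \leq \frac{1}{\delta}.
\]
On that event, Donsker--Varadhan applied to the exponent yields, simultaneously for all $\rho$,
\[
\gamma n\lr{\E_\rho[\tilde L(h)] - \E_\rho[\hat{\tilde L}(h,S)]} - \frac{n\phi(\gamma b)}{b^2}\E_\rho[\tilde\V(h)] \leq \KL(\rho\|\pi) + \ln\frac{1}{\delta},
\]
and dividing by $\gamma n$ and rearranging produces the claimed inequality. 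The only non-routine ingredient is the pointwise Bennett estimate above, and even that is classical; everything else is a direct instantiation of the Catoni / Donsker--Varadhan PAC-Bayes template, with $\phi(\gamma b)/(\gamma b^2)$ now playing the role of the Bernstein factor in \citet{SLCB+12}.
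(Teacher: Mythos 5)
Your argument is correct and is essentially the paper's own proof: the exponential-moment step appears there as a standalone Bennett's Lemma, proved from the monotonicity of $u^{-2}\phi(u)$ exactly as you prove it, and your Fubini--Markov--Donsker--Varadhan chain is precisely what the paper packages into its change-of-measure lemma. The one fragile point, which you share with the paper's own argument, is the claim that $Y_i=\tilde L(h)-\tilde\ell(h(X_i),Y_i)\le b$ holds ``in view of the hypothesis on $\tilde\ell$'': the hypothesis $\tilde\ell\le b$ bounds the loss from above and therefore leaves $-\tilde\ell$ (and hence $Y_i$) unbounded above, so this step actually requires $\tilde\ell\ge\tilde L(h)-b$, which is automatic for nonnegative losses but is not implied by one-sided boundedness from above alone.
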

The proof is based on a change of measure argument combined with Bennett's inequality, the details are provided in Appendix~\ref{app:Bennett}. Note that the result holds for a fixed (but arbitrary) $\gamma>0$. In case of optimization with respect to $\gamma$ a union bound has to be applied. For a fixed $\rho$ the bound is convex in $\gamma$ and for a fixed $\gamma$ it is convex in $\rho$, although it is not necessarily jointly convex in $\rho$ and $\gamma$. See Appendix~\ref{app:minimization_details} for optimization details. The PAC-Bayes-Bennett inequality is identical to the PAC-Bayes-Bernstein inequality of \citet[Theorem 7]{SLCB+12}, except that in the latter the coefficient in front of $\E_\rho[\tilde \Var[h]]$ is $(e-2)\gamma$ instead of $\frac{\phi(\gamma b)}{\gamma b^2}$. The result improves on the result of \citeauthor{SLCB+12} in two ways. First, in the result of \citeauthor{SLCB+12} $\gamma$ is restricted to the $(0, 1/b]$ interval, whereas in our result $\gamma$ is unrestricted from above. And second, we can rewrite the coefficient in front of the variance as $\frac{\phi(\gamma b)}{\gamma b^2}=\frac{\phi(\gamma b)}{\gamma^2 b^2}\gamma$, where $\frac{\phi(\gamma b)}{\gamma^2 b^2}$ is a monotonically increasing function of $\gamma$, which in the interval $\gamma\in(0,1/b]$ satisfies $\lim_{\gamma\to 0}\frac{\phi(\gamma b)}{\gamma^2 b^2} = \frac{1}{2}$ and for $\gamma=1/b$ it gives $\frac{\phi(\gamma b)}{\gamma^2 b^2} = (e-2)$. Thus, PAC-Bayes-Bennett is always at least as tight as PAC-Bayes-Bernstein and, at the same time, for $\gamma < 1/b$ it improves the constant coefficient in front of the variance from $(e-2)\approx 0.72$ down to 0.5 for $\gamma\to0$. For $\gamma > 1/b$ PAC-Bayes-Bennett also improves on PAC-Bayes-Bernstein, because PAC-Bayes-Bernstein uses the suboptimal value $\gamma=1/b$ dictated by its restricted range of $\gamma$.

\section{From oracle to empirical bounds}

We obtain empirical bounds on the oracle quantities $\E_{\rho^2}[L(h,h')]$ and $\E_\rho[L(h)]$ in Theorem~\ref{thm:MV} and $\E_{\rho^2}[L_\mu(h,h')]$ in Theorem~\ref{thm:MVBen} by using PAC-Bayesian inequalities. The empirical counterpart of the expected tandem loss is the empirical tandem loss
\[
\hat L(h,h',S) = \frac{1}{n}\sum_{i=1}^n \1[h(X_i)\neq Y_i]\1[h'(X_i) \neq Y_i].
\]
For bounding $\E_{\rho^2}[L(h,h')]$ and $\E_\rho[L(h)]$ we use either PAC-Bayes-kl or PAC-Bayes-$\lambda$ inequalities, both cited below. We use $\KL(\rho\|\pi)$ to denote the Kullback-Leibler divergence between distributions $\rho$ and $\pi$ on $\HH$ and $\kl(p\|q)$ to denote the Kullback-Leibler divergence between two Bernoulli distributions with biases $p$ and $q$.

\begin{theorem}[PAC-Bayes-kl Inequality, \citealp{See02}, \citealp{Mau04}] 
For any probability distribution $\pi$ on ${\cal H}$ that is independent of $S$ and any $\delta \in (0,1)$, with probability at least $1-\delta$ over a random draw of a sample $S$, for all distributions $\rho$ on ${\cal H}$ simultaneously:
\begin{equation}
\label{eq:PBkl}
\kl\lr{\E_\rho[\hat L(h,S)]\middle\|\E_\rho\lrs{L(h)}} \leq \frac{\KL(\rho\|\pi) + \ln(2 \sqrt n/\delta)}{n}.
\end{equation}
\label{thm:PBkl}
\end{theorem}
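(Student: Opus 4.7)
The plan is to follow the standard change-of-measure-plus-exponential-moment recipe for PAC-Bayes-kl. First I would invoke the Donsker--Varadhan variational characterization of KL divergence: for any distribution $\pi$ on $\HH$ independent of $S$, any $\rho$ absolutely continuous with respect to $\pi$, and any measurable $f : \HH \to \R$,
\[
\E_\rho[f(h)] \leq \KL(\rho\|\pi) + \ln \E_\pi[e^{f(h)}].
\]
I would apply this with $f(h) = n \cdot \kl(\hat L(h,S) \| L(h))$, and then combine it with the joint convexity of $\kl(\cdot\|\cdot)$ via Jensen's inequality,
\[
\kl\lr{\E_\rho[\hat L(h,S)] \middle\| \E_\rho[L(h)]} \leq \E_\rho[\kl(\hat L(h,S) \| L(h))],
\]
to arrive at the key chain
\[
n \cdot \kl\lr{\E_\rho[\hat L(h,S)] \middle\| \E_\rho[L(h)]} \leq \KL(\rho\|\pi) + \ln \E_\pi[e^{n \kl(\hat L(h,S) \| L(h))}].
\]

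Next I would control the exponential moment on the right in high probability over $S$. Since $\pi$ is independent of $S$, Fubini gives $\E_S \E_\pi[e^{n \kl(\hat L(h,S) \| L(h))}] = \E_\pi \E_S[e^{n \kl(\hat L(h,S) \| L(h))}]$. For each fixed $h$, the quantity $n\hat L(h,S)$ is a sum of $n$ i.i.d.\ Bernoulli random variables with mean $L(h)$, so Maurer's classical moment bound gives $\E_S[e^{n \kl(\hat L(h,S) \| L(h))}] \leq 2\sqrt{n}$. Markov's inequality then yields, with probability at least $1-\delta$ over $S$,
\[
\ln \E_\pi[e^{n \kl(\hat L(h,S) \| L(h))}] \leq \ln \frac{2\sqrt{n}}{\delta}.
\]
Dividing the combined bound by $n$ produces the stated inequality. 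Crucially, the right-hand side of the exponential-moment control no longer depends on $\rho$, so the bound holds uniformly for all $\rho$ on the same high-probability event over $S$, as required by the theorem.

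The main obstacle I expect is Maurer's exponential-moment estimate $\E_S[e^{n \kl(\hat L(h,S) \| L(h))}] \leq 2\sqrt{n}$; it is standard but not immediate, typically proved by relating $e^{n \kl(\hat p \| p)}$ to the reciprocal of a Bernoulli tail probability and summing. Everything else is essentially routine: the change-of-measure inequality is a supremum-in-$\rho$ statement by construction, so uniformity in $\rho$ is free, convexity of $\kl$ is classical, and the transition from expectation to a high-probability statement is a single application of Markov.
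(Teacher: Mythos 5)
Your proposal is correct and is the standard Seeger--Maurer proof of this inequality: Donsker--Varadhan change of measure applied to $f(h)=n\,\kl(\hat L(h,S)\|L(h))$, joint convexity of $\kl$ via Jensen, Maurer's moment bound $\E_S[e^{n\kl(\hat L(h,S)\|L(h))}]\leq 2\sqrt{n}$, and Markov's inequality. The paper does not reprove this theorem but cites it, and your argument matches both the cited source and the change-of-measure machinery (Lemma~\ref{lem:PAC-Bayes}) that the paper deploys for its own PAC-Bayes-Bennett result; the only footnote worth adding is that Maurer's constant $2\sqrt{n}$ is usually stated for $n\geq 8$, with a slightly different constant needed for very small samples.
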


\begin{theorem}[PAC-Bayes-$\lambda$ Inequality, \citealp{TIWS17,MLIS20}]\label{thm:lambdabound} For any probability distribution $\pi$ on ${\cal H}$ that is independent of $S$ and any $\delta \in (0,1)$, with probability at least $1-\delta$ over a random draw of a sample $S$, for all distributions $\rho$ on ${\cal H}$ and all $\lambda \in (0,2)$ and $\gamma > 0$ simultaneously:
\begin{align}
\E_\rho\lrs{L(h)} &\leq \frac{\E_\rho[\hat L(h,S)]}{1 - \frac{\lambda}{2}} + \frac{\KL(\rho\|\pi) + \ln(2 \sqrt n/\delta)}{\lambda\lr{1-\frac{\lambda}{2}}n},\label{eq:PBlambda}\\
\E_\rho\lrs{L(h)} &\geq \lr{1 - \frac{\gamma}{2}}\E_\rho[\hat L(h,S)] - \frac{\KL(\rho\|\pi) + \ln(2 \sqrt n/\delta)}{\gamma n}.\label{eq:PBlambda-lower}
\end{align}
\label{thm:PBlambda}
\end{theorem}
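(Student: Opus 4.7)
The plan is to derive both inequalities of Theorem~\ref{thm:PBlambda} from PAC-Bayes-kl (Theorem~\ref{thm:PBkl}) by linearizing the $\kl$ term via its Legendre dual. For any $\eta\in\R$ one has $\kl(p\|q)\ge \eta p-\ln(1-q+q e^\eta)$, and since the PAC-Bayes-kl event is simultaneous in $\rho$ and does not depend on $\eta$, any deterministic consequence indexed by $\eta$ automatically holds for all $\rho$ and all $\eta$ simultaneously. This is precisely the quantification asserted by the theorem, and crucially it requires no additional union bound over $\eta$.

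For the upper bound \eqref{eq:PBlambda} I set $\eta=-\lambda$ with $\lambda\in(0,2)$. Applying $-\ln(1-x)\ge x$ gives $\kl(p\|q)\ge q(1-e^{-\lambda})-\lambda p$, and a two-derivatives argument on $f(\lambda)=e^{-\lambda}-1+\lambda-\lambda^2/2$ (satisfying $f(0)=f'(0)=0$ and $f''(\lambda)=e^{-\lambda}-1\le 0$ on $[0,\infty)$) yields the elementary inequality $1-e^{-\lambda}\ge \lambda(1-\lambda/2)$. Substituting $p=\E_\rho[\hat L(h,S)]$, $q=\E_\rho[L(h)]\ge 0$, bounding $\kl(p\|q)$ by the right-hand side of \eqref{eq:PBkl}, and dividing through by $\lambda(1-\lambda/2)>0$ produces \eqref{eq:PBlambda}.

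For the lower bound \eqref{eq:PBlambda-lower} I set $\eta=\gamma>0$ and apply $\ln(1+x)\le x$ to obtain $\kl(p\|q)\ge \gamma p-q(e^\gamma-1)$. Combining with PAC-Bayes-kl and rearranging gives
\[
\E_\rho[L(h)]\ge \frac{\gamma\E_\rho[\hat L(h,S)]-C}{e^\gamma-1},\qquad C=\frac{\KL(\rho\|\pi)+\ln(2\sqrt n/\delta)}{n}.
\]
The target form $(1-\gamma/2)\E_\rho[\hat L(h,S)]-C/(\gamma n)$ then follows by showing the right-hand side above dominates it for all $\gamma>0$, which after clearing denominators reduces to $\gamma\E_\rho[\hat L(h,S)]\cdot[\gamma-(e^\gamma-1)(1-\gamma/2)]\ge -[(e^\gamma-1)-\gamma]\cdot C$. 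Both bracketed quantities are non-negative---the first by the same two-derivatives technique applied to $g(\gamma)=(e^\gamma-1)(1-\gamma/2)-\gamma$ (which satisfies $g(0)=g'(0)=0$ and $g''\le 0$), the second because $e^\gamma-1\ge \gamma$---so the left-hand side is $\ge 0$ and the right-hand side is $\le 0$, and the inequality holds unconditionally in $\gamma$.

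The only real obstacle is bookkeeping: verifying the two elementary analytic inequalities relating $e^{\pm\lambda}$ to quadratic polynomials, and noting that because every step after invoking Theorem~\ref{thm:PBkl} is a deterministic implication, the quantifier over $\lambda\in(0,2)$ and $\gamma>0$ can be pushed inside the ``with probability at least $1-\delta$'' scope for free. In particular, the factor $\ln(2\sqrt n/\delta)$ is inherited unchanged from PAC-Bayes-kl and no new constants are introduced.
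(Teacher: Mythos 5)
The paper does not actually prove Theorem~\ref{thm:PBlambda}; it imports it from \citet{TIWS17}, so there is no in-paper proof to match---but as a self-contained reconstruction from Theorem~\ref{thm:PBkl} your argument is correct, and it takes a genuinely different route from the standard one. The cited source obtains \eqref{eq:PBlambda} by combining a refined Pinsker inequality of the form $q-p\leq\sqrt{2q\,\kl(p\|q)}$ with the parametrized AM--GM step $\sqrt{xy}\leq\frac{\lambda x}{2}+\frac{y}{2\lambda}$, and \eqref{eq:PBlambda-lower} by the symmetric argument with the roles of $p$ and $q$ exchanged; you instead start from the Donsker--Varadhan lower bound $\kl(p\|q)\geq\eta p-\ln(1-q+qe^{\eta})$ and get both directions by choosing the sign of $\eta$, using $-\ln(1-x)\geq x$ and $1-e^{-\lambda}\geq\lambda(1-\lambda/2)$ for the upper bound, and $\ln(1+x)\leq x$ plus the comparison of $\frac{\gamma p-C}{e^{\gamma}-1}$ against $(1-\gamma/2)p-C/\gamma$ for the lower bound. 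I checked the details: both elementary inequalities follow from your two-derivatives argument (in the second case $g''(\gamma)=-\tfrac{\gamma}{2}e^{\gamma}\leq 0$), the sign analysis after clearing denominators is valid because $p\geq 0$ and $C\geq 0$, and your observation that the quantifiers over $\lambda$ and $\gamma$ come for free---both inequalities being deterministic consequences of the single PAC-Bayes-kl event---is exactly why this bound needs no union bound over its parameter, in contrast to Theorems~\ref{thm:PBBennett} and \ref{thm:var-bound}. The two routes yield identical constants; yours is arguably more systematic in that one variational relaxation handles both directions, while the refined-Pinsker route makes the $\sqrt{q\,\kl}$ structure (and hence the quasiconvexity exploited for optimization) more visible. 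The only blemish is the typo $C/(\gamma n)$ where, given your definition of $C$, you mean $C/\gamma$; your subsequent algebra uses the correct form.
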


(The upper bound \eqref{eq:PBlambda} is due to \citet{TIWS17} and the lower bound \eqref{eq:PBlambda-lower} is due to \citet{MLIS20}, and the two bounds hold simultaneously.) The PAC-Bayes-$\lambda$ inequality is an optimization-friendly relaxation of the PAC-Bayes-kl inequality. Therefore, for optimization of $\rho$ we use the PAC-Bayes-$\lambda$ inequality, the upper bound for $\E_{\rho^2}[L(h,h')]$ and the lower or upper bound for $\E_\rho[L(h)]$, depending on the positiveness of $\mu$, but once we have converged to a solution we use PAC-Bayes-kl to compute the final bound. The kl form provides both an upper and a lower bound through the upper and lower inverse of the kl.\footnote{\citet{RDGR18} and \citet{LGGL19} provide alternative ways of direct minimization of the upper bound on $\E_\rho[L(h)]$ given by the upper inverse of $\kl$ in the PAC-Bayes-kl bound. We use the PAC-Bayes-$\lambda$ relaxation due to its simplicity, and because it provides an easy way of simultaneous optimization of an upper bound on $\E_{\rho^2}[L(h,h')]$ and a lower or upper bound on $\E_\rho[L(h)]$ (depending on $\mu$).} Taking the oracle bound from Theorem~\ref{thm:MV} and bounding the oracle quantities using Theorem~\ref{thm:PBlambda} we obtain the following result.
\begin{theorem}\label{thm:first-mu-bound}
For any distribution $\pi$ on $\HH$ that is independent of $S$, and any $\delta\in(0,1)$, with probability at least $1-\delta$ over a random draw of $S$, for all distributions $\rho$ on $\HH$, and all $\mu, \lambda$, and $\gamma$ in the ranges specified below simultaneously, we have:
\begin{itemize}[leftmargin=*]
    \item For $\mu \in [0,0.5)$, $\lambda\in(0,2)$, and $\gamma > 0$:
\begin{multline*}
L(\MV_\rho)\leq \frac{1}{(0.5-\mu)^2}\bigg[\frac{\E_{\rho^2}[\hat L(h,h',S)]}{1 - \frac{\lambda}{2}} + \frac{2\KL(\rho\|\pi) + \ln(4 \sqrt n/\delta)}{\lambda\lr{1-\frac{\lambda}{2}}n}\\ 
- 2\mu\lr{\lr{1 - \frac{\gamma}{2}}\E_\rho[\hat L(h,S)] - \frac{\KL(\rho\|\pi) + \ln(4 \sqrt n/\delta)}{\gamma n}} + \mu^2\bigg].
\end{multline*}
\item For $\mu < 0$, $\lambda\in(0,2)$, and $\gamma \in (0,2)$:
\begin{multline*}
L(\MV_\rho)\leq \frac{1}{(0.5-\mu)^2}\bigg[\frac{\E_{\rho^2}[\hat L(h,h',S)]}{1 - \frac{\lambda}{2}} + \frac{2\KL(\rho\|\pi) + \ln(4 \sqrt n/\delta)}{\lambda\lr{1-\frac{\lambda}{2}}n}\\ 
- 2\mu\lr{ \frac{\E_{\rho}[\hat L(h,S)]}{1 - \frac{\gamma}{2}} + \frac{\KL(\rho\|\pi) + \ln(4\sqrt n/\delta)}{\gamma\lr{1-\frac{\gamma}{2}}n}} + \mu^2\bigg].
\end{multline*}
\end{itemize}
\end{theorem}
\begin{proof}
The result follows by substitution of the upper bound \eqref{eq:PBlambda} on $\E_{\rho^2}[L(h,h')]$ and the lower bound \eqref{eq:PBlambda-lower} on $\E_\rho[L(h)]$ in the case of positive $\mu$, or the upper bound \eqref{eq:PBlambda} on $\E_\rho[L(h)]$ in the case of negative $\mu$, into Theorem~\ref{thm:MV}. We note that $\KL(\rho^2\|\pi^2) = 2\KL(\rho\|\pi)$ \citep[Page 814]{GLL+15}, which gives the factor 2 in front of the first $\KL$ term. The factor 4 in the logarithms comes from a union bound over the bounds on $\E_{\rho^2}[L(h,h')]$ and $\E_\rho[L(h)]$.
\end{proof}

We note that both the loss and the tandem loss are Bernoulli random variables, and for Bernoulli random variables the PAC-Bayes-kl inequality is tighter than the PAC-Bayes-Bennett \citep{TS13}. However, the empirical counterpart of the expected $\mu$-tandem loss is the empirical $\mu$-tandem loss
\[
\hat L_\mu(h,h',S) = \frac{1}{n}\sum_{i=1}^n (\1[h(X_i)\neq Y_i] - \mu)(\1[h'(X_i) \neq Y_i] - \mu),
\]
and the $\mu$-tandem losses are not Bernoulli. Therefore, we use the PAC-Bayes-Bennett inequality, which provides an advantage if the variance of the $\mu$-tandem losses happens to be small. The expected and empirical variance of the $\mu$-tandem losses of a pair of hypotheses $h$ and $h'$ are, respectively, defined by
\begin{align*}
\Var_\mu(h,h') &= \E_D[\lr{(\1[h(X)\neq Y] - \mu)(\1[h'(X)\neq Y] - \mu) - L_\mu(h,h')}^2],\\
\hat \Var_\mu(h,h',S) &= \frac{1}{n-1}\sum_{i=1}^n \lr{(\1[h(X_i)\neq Y_i] - \mu)(\1[h'(X_i)\neq Y_i] - \mu) - \hat L_\mu(h,h',S)}^2.
\end{align*}
The empirical variance $\hat \Var_\mu(h,h',S)$ is an unbiased estimate of $\Var_\mu(h,h')$.

Since the PAC-Bayes-Bennett inequality is stated in terms of the oracle variance $\E_\rho[\tilde \V(h)]$, we use the result by \citet[Equation (15)]{TS13} to bound it in terms of the empirical variance. For a general loss function $\tilde\ell(\cdot,\cdot)$ (not necessarily within $[0,1]$), we define the empirical variance of the loss of $h$ by $\hat{\tilde \V}(h,S)=\frac{1}{n-1}\sum_{i=1}^n(\tilde\ell(h(X_i),Y_i)-\tilde L(h))^2$. We recall that $\tilde L$, $\tilde \V$, and $\hat{\tilde L}$ were defined above Theorem~\ref{thm:PBBennett}. We note that the result of \citeauthor{TS13} assumes that the losses are bounded in the $[0,1]$ interval. Rescaling to a general range introduces the squared range factor $c^2$ in front of the last term in the inequality below, since scaling a random variable by $c$ scales the variance by $c^2$.
\begin{theorem}[\citealp{TS13}]
\label{thm:var-bound}
Let $\tilde \ell(\cdot,\cdot)$ be an arbitrary bounded loss function and let $c$ be the length of the loss range. Then for any distribution $\pi$ on $\HH$ that is independent of $S$, any $\lambda\in\lr{0,\frac{2(n-1)}{n}}$, and any $\delta\in(0,1)$, with probability at least $1-\delta$ over a random draw of the sample $S$, for all distributions $\rho$ on $\HH$ simultaneously:
\[
\E_\rho[\tilde \V(h)]\leq \frac{\E_\rho[\hat{\tilde \V}(h,S)]}{1-\frac{\lambda n}{2(n-1)}} + \frac{c^2\lr{\KL(\rho\|\pi)+\ln\frac{1}{\delta}}}{n\lambda\lr{1-\frac{\lambda n}{2(n-1)}}}.
\]
\end{theorem}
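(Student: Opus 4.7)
The plan is to reduce the claim to the $[0,1]$-bounded version proved as Equation~(15) by \citet{TS13}, and then verify that an affine rescaling of the loss produces exactly the stated constants. The $[0,1]$-version states that for any loss $\ell': \mathcal{X}\times\mathcal{Y}\to[0,1]$, with associated population and empirical variances $\V'(h)$ and $\hat \V'(h,S)$, under the hypotheses of the theorem,
\[
\E_\rho[\V'(h)] \;\leq\; \frac{\E_\rho[\hat \V'(h,S)]}{1 - \frac{\lambda n}{2(n-1)}} \;+\; \frac{\KL(\rho\|\pi) + \ln\frac{1}{\delta}}{n\lambda\lr{1 - \frac{\lambda n}{2(n-1)}}}
\]
holds with probability at least $1-\delta$ for all $\rho$ simultaneously.

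To deduce the general-range statement, I would first invoke translation-invariance of variance to assume without loss of generality that $\tilde\ell$ takes values in $[0,c]$. Defining $\ell'(h(X),Y) := \tilde\ell(h(X),Y)/c\in[0,1]$, the corresponding variances satisfy $\V'(h) = \tilde\V(h)/c^2$ and $\hat \V'(h,S) = \hat{\tilde\V}(h,S)/c^2$ by homogeneity of variance under scaling. Applying the $[0,1]$-inequality to $\ell'$ and multiplying both sides by $c^2$ reproduces the stated bound exactly: the $c^2$ cancels in the first term on the right (a ratio of variances), and it multiplies the complexity term on the second, giving the displayed $c^2\lr{\KL(\rho\|\pi)+\ln\frac{1}{\delta}}$ factor.

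The real work is already done inside the cited result, whose own proof combines a PAC-Bayesian change-of-measure step with a Bennett-type moment bound applied to the U-statistic representation
\[
\hat{\tilde\V}(h,S) \;=\; \frac{1}{n(n-1)}\sum_{i\neq j}\frac{1}{2}\lr{\tilde\ell(h(X_i),Y_i) - \tilde\ell(h(X_j),Y_j)}^2,
\]
which is what accounts for the $n-1$ factor in the denominator of the constants and for the constraint $\lambda\in\lr{0,\frac{2(n-1)}{n}}$. Beyond citing \citet{TS13}, the only step that needs care here is the rescaling argument: one has to verify that $\V$, $\hat\V$, and the complexity term all transform consistently under $\tilde\ell \mapsto \tilde\ell/c$, which they do because variance is translation-invariant and homogeneous of degree two. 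There is no genuine obstacle to foresee; the main thing to double-check is that the constants end up precisely as displayed and not, for example, with a stray factor of $c$ or $c^4$.
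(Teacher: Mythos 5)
Your proposal is correct and matches the paper's own argument: the paper likewise takes Equation~(15) of \citet{TS13} for $[0,1]$-valued losses as given and obtains the general-range statement by observing that translating and rescaling the loss by $c$ multiplies both the population and empirical variances by $c^2$, so the first term on the right is unchanged as a ratio of variances while the complexity term picks up exactly the $c^2$ factor. One cosmetic remark: the internal proof in \citet{TS13} rests on the self-bounding property of the empirical variance (via Maurer--Pontil) rather than a Bennett-type moment bound on the U-statistic representation, but since that machinery is entirely contained in the cited result, this does not affect the validity of your reduction.
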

We note that, similar to the PAC-Bayes-Bennett inequality, but in contrast to the PAC-Bayes-$\lambda$ inequality, the inequality above holds for a fixed value of $\lambda$ and in case of optimization over $\lambda$ a union bound has to be applied.

The last thing that is left is to bound the length of the range of $\mu$-tandem losses defined in equation \eqref{eq:ell-mu}.
\begin{lemma}
\label{lem:range}
For $\mu < 0.5$ we have that the length of the range of $\ell_\mu(\cdot,\cdot,\cdot)$ is $K_\mu=\max\{1-\mu,1-2\mu\}$.
\end{lemma}
A proof is provided in Appendix~\ref{app:range}. Taking together the results of Theorems~\ref{thm:MVBen}, \ref{thm:PBBennett}, \ref{thm:var-bound}, and Lemma~\ref{lem:range} we obtain the following result.
\begin{theorem}\label{thm:second-mu-bound}
For any parameter grid $\{\gamma_1,\dots,\gamma_{k_\gamma}\}$ and $\{\lambda_1,\dots,\lambda_{k_\lambda}\}$, where $\gamma_i > 0$ for all $i$ and $\lambda_i\in\lr{0,\frac{2(n-1)}{n}}$ for all $i$, any distribution $\pi$ on $\HH$ that is independent of $S$, and any $\delta\in(0,1)$, with probability at least $1-\delta$ over a random draw of $S$, for all values of $\mu < 0.5$, all distributions $\rho$ on $\HH$, and all values of $\gamma$ and $\lambda$ in the parameter grid simultaneously:
\begin{multline*}
L(\MV_\rho) \leq \frac{1}{(0.5-\mu)^2}\Bigg(\E_{\rho^2}[\hat L_\mu(h,h',S)] + \frac{2\KL(\rho\|\pi) + \ln\frac{2 k_\gamma k_\lambda}{\delta}}{\gamma n}\\
+\frac{\phi(\gamma K_\mu)}{\gamma K_\mu^2}\lr{\frac{\E_{\rho^2}[\hat \V_\mu(h,h',S)]}{1-\frac{\lambda n}{2(n-1)}} + \frac{K_\mu^2\lr{2\KL(\rho\|\pi)+\ln \frac{2 k_\gamma k_\lambda}{\delta}}}{n\lambda \lr{1-\frac{\lambda n}{2(n-1)}}}}\Bigg).
\end{multline*}
\end{theorem}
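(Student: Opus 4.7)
The plan is to chain four results already in the paper. First, I would invoke the oracle bound of Theorem~\ref{thm:MVBen}, which reduces $L(\MV_\rho)$ to $\E_{\rho^2}[L_\mu(h,h')]/(0.5-\mu)^2$. From that point on the only task is to bound the numerator $\E_{\rho^2}[L_\mu(h,h')]$ from above by empirically observable quantities, and then divide by $(0.5-\mu)^2$ at the end.

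Second, I would apply the PAC-Bayes-Bennett inequality (Theorem~\ref{thm:PBBennett}) with the $\mu$-tandem loss playing the role of $\tilde\ell$, the product distribution $\rho^2$ on $\HH\times\HH$ playing the role of $\rho$, and prior $\pi^2$. Lemma~\ref{lem:range} supplies the required upper bound $b=(1-\mu)^2$, which produces the coefficient $\phi(\gamma(1-\mu)^2)/(\gamma(1-\mu)^4)$ in front of the oracle variance. The identity $\KL(\rho^2\|\pi^2)=2\KL(\rho\|\pi)$ converts the KL term into $2\KL(\rho\|\pi)$ in the final inequality. At this stage the remaining nonempirical quantity is $\E_{\rho^2}[\V_\mu(h,h')]$.

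Third, the variance is handled by Theorem~\ref{thm:var-bound}, again lifted to the product distribution $\rho^2$, with range $c=K_\mu$ from Lemma~\ref{lem:range}. This substitution produces the $K_\mu^2$ multiplier and the $1-\lambda n/(2(n-1))$ denominators that appear in the inner parentheses of the claim, together with a second factor of $2\KL(\rho\|\pi)$ from the same lift. Plugging this upper bound on the variance back into the Bennett inequality of the previous step yields the full parenthesized expression of the theorem, modulo the confidence bookkeeping.

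Finally, I would pay for the union bound. Theorem~\ref{thm:PBBennett} is stated for a fixed $\gamma$ and Theorem~\ref{thm:var-bound} for a fixed $\lambda$, while the statement requires simultaneous validity over the full $\mu$-$\gamma$-$\lambda$ product grid. Splitting $\delta$ evenly between the two inequalities and taking a union bound over all $k_\mu k_\gamma k_\lambda$ configurations in each (a slight overcounting, since Bennett does not depend on $\lambda$ and the variance bound does not depend on $\gamma$) replaces each $\ln(1/\delta)$ by $\ln(2k_\mu k_\gamma k_\lambda/\delta)$, matching the statement. The main bookkeeping hazard is keeping the two factor-of-two's straight: the $2$ in front of $\KL(\rho\|\pi)$ comes from lifting $\rho$ to $\rho^2$, while the $2$ inside the logarithm comes from splitting the confidence between the two empirical inequalities. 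No new analytical work is needed beyond what the cited results already supply.
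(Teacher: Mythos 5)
Your proposal is correct and follows essentially the same route as the paper's proof: chaining Theorem~\ref{thm:MVBen}, the PAC-Bayes-Bennett inequality with $b=(1-\mu)^2$, Theorem~\ref{thm:var-bound} with range $c=K_\mu$ (both supplied by Lemma~\ref{lem:range}), and a union bound over the parameter grids and the two empirical inequalities yielding the $2k_\mu k_\gamma k_\lambda$ factor. Your accounting of where each factor of $2$ comes from matches the paper's (deliberately conservative) bookkeeping.
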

\begin{proof}
The result follows by reverse substitution of the result of Lemma~\ref{lem:range} into Theorem~\ref{thm:var-bound}, then into Theorem~\ref{thm:PBBennett}, and finally into Theorem~\ref{thm:MVBen}. Since $\KL(\rho^2\|\pi^2) = 2\KL(\rho\|\pi)$, we have factor 2 in front of the $\KL$ terms. The factor $2 k_\gamma k_\lambda$ comes from a union bound over the parameter grid and the bounds in Theorems~\ref{thm:PBBennett} and \ref{thm:var-bound}.
\end{proof}

\section{Experiments}
\label{sec:exper}

We start with a simulated comparison of the oracle bounds and then present an empirical evaluation on real data. The python source code for replicating the experiments is available at Github\footnote{\url{https://github.com/StephanLorenzen/MajorityVoteBounds}}.

\subsubsection*{Comparison of the oracle bounds}

Figure~\ref{fig:M2vsCC} depicts a comparison of the second order oracle bound based on the Chebyshev-Cantelli inequality (Theorems~\ref{thm:Cbound}, \ref{thm:MV} and \ref{thm:MVBen}, which, as oracle bounds, are equivalent) and the second order oracle bound based on the second order Markov's inequality (Theorem~\ref{thm:tandembound}). 
\begin{wrapfigure}{r}{0.4\textwidth}
    \centering
    \vspace{-.25cm}
    \includegraphics[width=.4\textwidth]{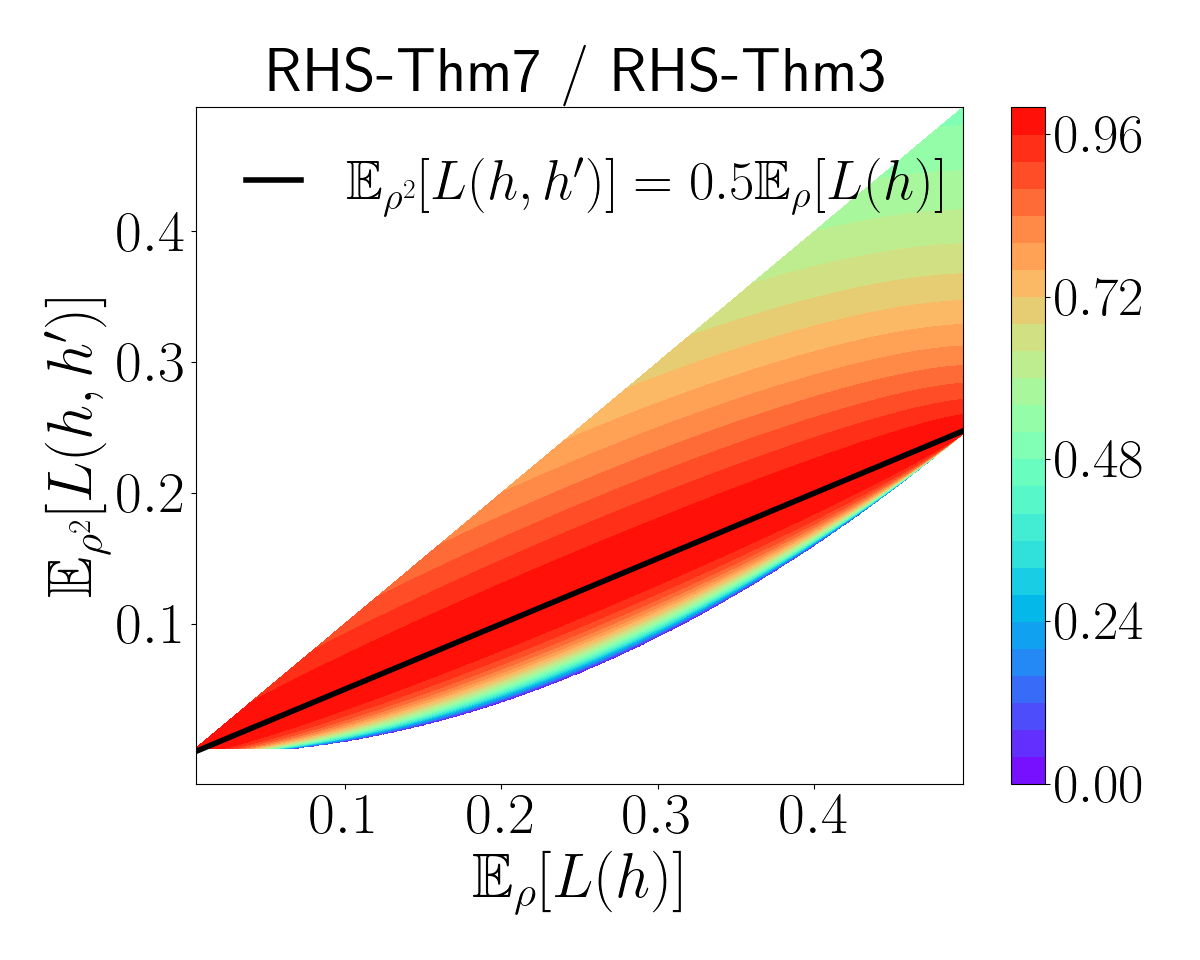}
    \caption{Theorem~\ref{thm:MV} vs.\ Theorem~\ref{thm:tandembound}}
    \vspace{-.5cm}
    \label{fig:M2vsCC}
\end{wrapfigure}
We plot the ratio of the right hand side of the bound in Theorem~\ref{thm:MV} for the optimal value $\mu^*=\E_\rho[L(h)]-\frac{\E_{\rho^2}[L(h,h')]-\E_\rho[L(h)]^2}{0.5 - \E_\rho[L(h)]}$ to the value of the right hand side of the bound in Theorem~\ref{thm:tandembound}. A simple calculation shows that if $\E_{\rho^2}[L(h,h')] = 0.5 \E_\rho[L(h)]$, then $\mu^*=0$, which recovers the bound in Theorem~\ref{thm:tandembound}. The line $\E_{\rho^2}[L(h,h')] = 0.5 \E_\rho[L(h)]$ is shown in black in Figure~\ref{fig:M2vsCC}. We also note that $\E_\rho[L(h)]^2 \leq \E_{\rho^2}[L(h,h')] \leq \E_\rho[L(h)]$, which defines the feasible region in Figure~\ref{fig:M2vsCC}. 
Whenever $\E_{\rho^2}[L(h,h')] \neq 0.5 \E_\rho[L(h)]$ the Chebyshev-Cantelli inequality provides an improvement over second order Markov's inequality. The region above the black line, where $\E_{\rho^2}[L(h,h')] > 0.5 \E_\rho[L(h)]$, is the region of high correlation of errors and in this case majority vote yields little improvement over individual classifiers. In this region the first order oracle bound is tighter than the second order oracle bounds (see Appendix~\ref{app:FOvsSO}). 
The region below the black line, where $\E_{\rho^2}[L(h,h')] < 0.5 \E_\rho[L(h)]$, is the region of low correlation of errors. In this region the second order oracle bounds are tighter than the first order oracle bound. Note that the potential for improvement below the black line is much higher than above it.

\subsubsection*{Empirical evaluation on real datasets}

We studied the empirical performance of the bounds using standard random forest \citep{Bre01} and a combination of heterogeneous classifiers on a subset of data sets from UCI and LibSVM repositories \citep{UCI,libsvm}. An overview of the data sets is given in Appendix~\ref{app:datasets}. The number of points varied from 3000 to 70000 with dimensions $d<1000$. For each data set, we set aside 20\% of the data for the test set $\testset$ and used the remaining data $S$ for ensemble construction, weight optimization and bound evaluation.
We evaluate the classifiers and bounds obtained by minimizing the tandem bound $\TND$ \citep[Theorem 9]{MLIS20}, which is the empirical bound on the oracle tandem bound in Theorem~\ref{thm:tandembound}, the Chebyshev-Cantelli bound with TND empirical loss estimate bound $\CMUTND$ (Theorem~\ref{thm:first-mu-bound}), and the Chebyshev-Cantelli bound with PAC-Bayes-Bennett loss estimate bound $\COTND$ (Theorem~\ref{thm:second-mu-bound}).
We made 10 repetitions of each experiment.

\paragraph{Ensemble construction and minimization of the bounds.}
%\label{sec:ensemble}
%\citet{TIWS17} proposed a method to construct finite data-dependent hypothesis spaces that work well with PAC-Bayesian bounds. 
We follow the construction used by \citet{MLIS20}. The idea is to generate multiple random splits of the data set $S$ into pairs of subsets $S=T_h\cup S_h$, such that $T_h\cap S_h=\emptyset$. Each hypothesis is trained on $T_h$ and the empirical loss on $S_h$ provides an unbiased estimate of its expected loss. Note that the splits cannot depend on the data.
For our experiments, we generate these %One example for such splits is the splits generated 
splits by bagging, where out-of-bag (OOB) samples $S_h$ provide unbiased estimates of expected losses of individual hypotheses $h$. % \citep{LIS19,MLIS20}.
%Another example is to created a $k$-folds splits, such that $S=\bigcup_{i=1}^k S_i$ and $S_i\cap S_j=\emptyset$ for $i\neq j$. A subset of the hypotheses $\mathcal{H}_i$ is trained on the $i$th fold and the remaining folds $S\backslash S_i$ provide an unbiased estimate of the expected loss of the hypotheses in $\mathcal{H}_i$.
The resulting set of hypotheses produces an ensemble. %, and PAC-Bayesian bounds provide generalization bounds for the ensemble as well as approaches to weight tuning. 
As in the work of \citeauthor{MLIS20}, two modifications are required to apply the bounds: the empirical losses $\hat L(h,S)$ in the bounds are replaced by the validation losses $\hat L(h,S_h)$, and the sample size $n$ is replaced by the minimal validation size $\min_h|S_h|$. For pairs of hypotheses $(h,h')$, we take the overlaps of their validation sets $S_h\cap S_{h'}$ to calculate an unbiased estimate of their tandem loss $\hat L(h,h',S_h\cap S_{h'})$, $\mu$-tandem loss $\hat L_\mu(h,h',S_h\cap S_{h'})$, and the variance of the $\mu$-tandem loss $\hat \Var_\mu(h,h',S_h\cap S_{h'})$, which replaces the corresponding empirical losses in the bounds. The sample size is then replaced by $\min_{h,h'}|S_h\cap S_{h'} |$. The details on bound minimization are provided in Appendix~\ref{app:minimization_details}.

%\paragraph{Data sets.}
%  For boosting, we use decision stumps to be the base classifiers. The number of stumps is set to be at least the number of stumps, where adding more stumps doesn't help improve the test risk. We made 5 repetitions of each experiment and report the mean and standard deviation. 

\paragraph{Optimizing weighted random forest.}
In the first experiment we compare $\TND$, $\CMUTND$, and $\COTND$ bounds in the setting studied by \citet{MLIS20}. We take 100 fully grown trees, use the Gini criterion for splitting, and consider $\sqrt{d}$ features in each split. Figure~\ref{fig:RFopt:risks} compares the loss of the random forest on $\testset$ using either uniform weighting $\rho_u$ or optimized weighting $\rho^*$ found by minimization of the three bounds (we exclude the first order bound from the comparison, since it was shown by \citeauthor{MLIS20} that it significantly deteriorates the test error of the ensemble).
While $\CMUTND$ often performs similar to $\TND$, we find that optimizing using $\COTND$ often improves accuracy.
Figure~\ref{fig:RFopt:bounds} compares the tightness of the optimized $\CMUTND$ and $\COTND$ bounds to the optimized $\TND$ bound. The $\CMUTND$ is generally comparable to $\TND$, while $\COTND$ is consistently looser than $\TND$, mainly due to the union bounds. The numerical values for the losses and the bounds can be found in Tables~\ref{tab:RFopt:risks} and \ref{tab:RFopt:bounds} in Appendix~\ref{app:experiments:RFopt}.
\begin{figure}[t]
    \centering
    \begin{subfigure}[b]{0.49\textwidth}
        \centering
        \includegraphics[width=\textwidth]{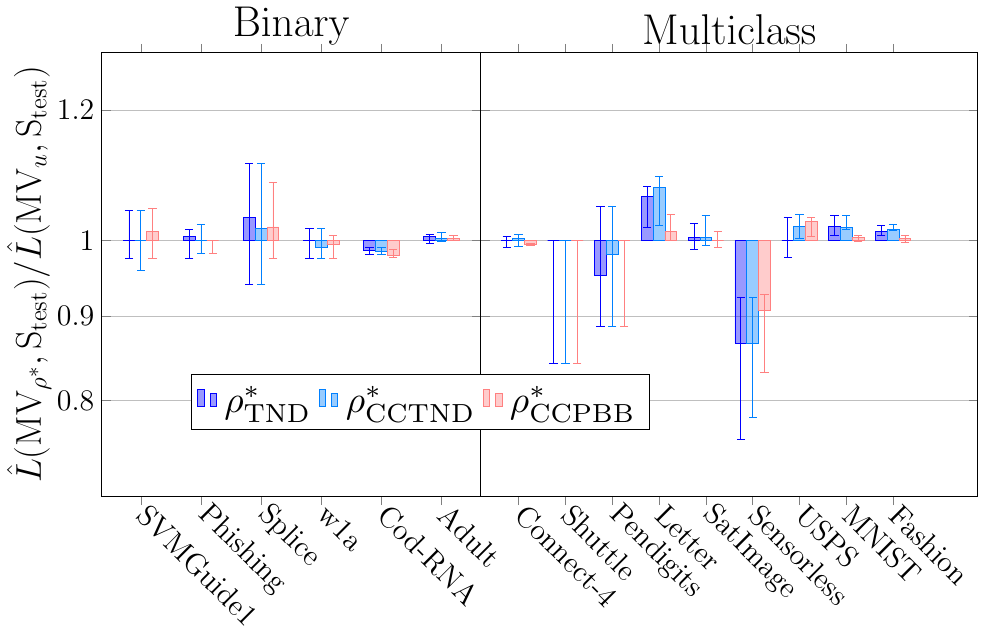}
        \caption{}
        \label{fig:RFopt:risks}
    \end{subfigure}
    \hfill
    \begin{subfigure}[b]{0.49\textwidth}
        \centering
        \includegraphics[width=\textwidth]{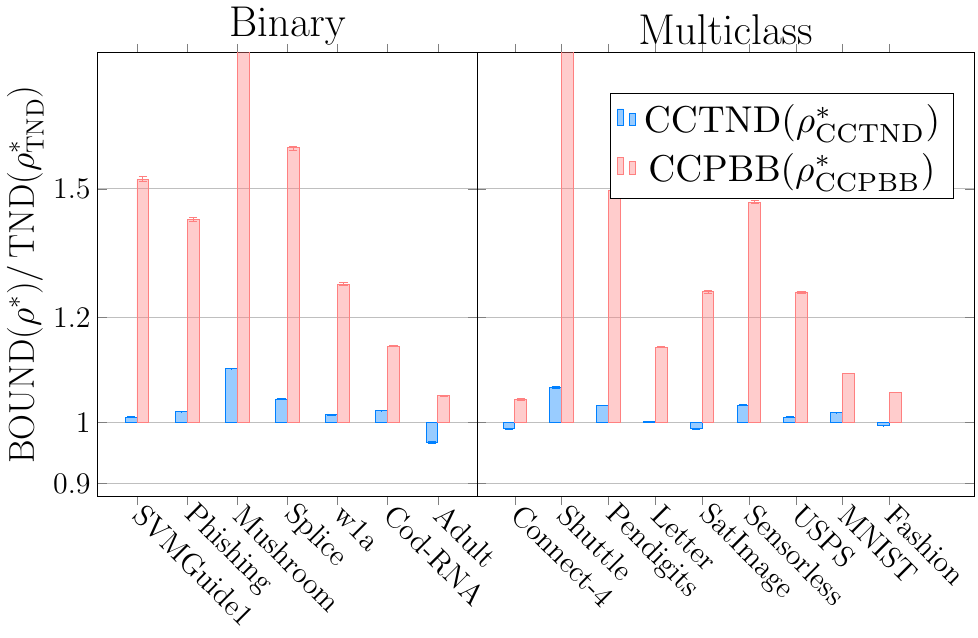}
        \caption{}
        \label{fig:RFopt:bounds}
    \end{subfigure}
    \begin{subfigure}[b]{0.49\textwidth}
        \centering
        \includegraphics[width=\linewidth]{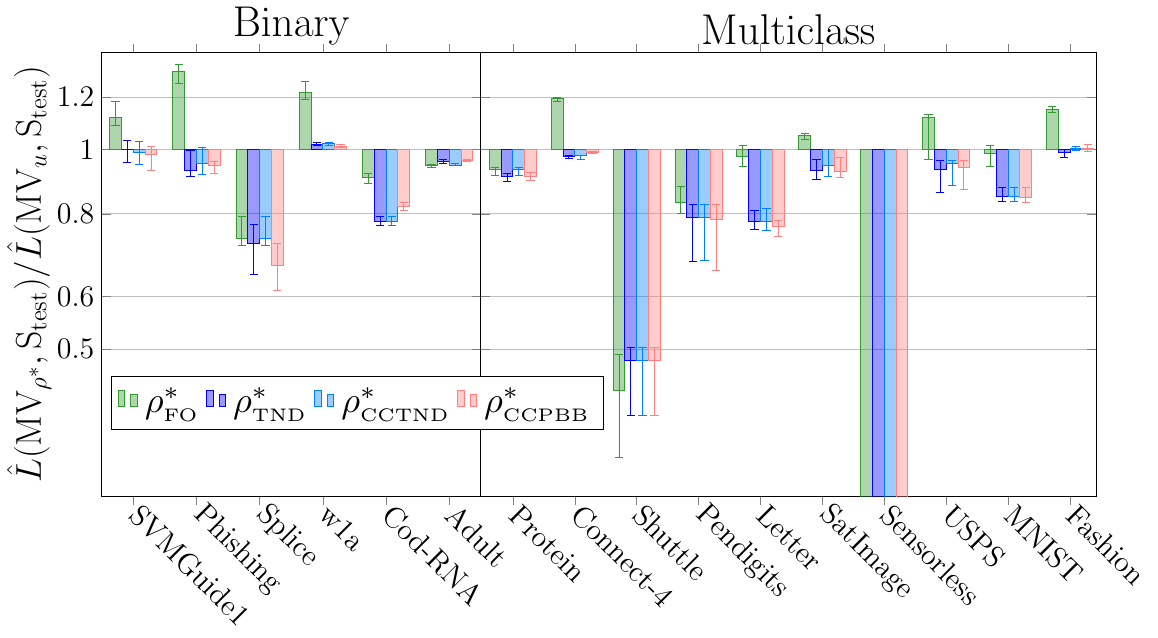}
        \caption{}
        \label{fig:mce:risks}
    \end{subfigure}
    \hfill
    \begin{subfigure}[b]{0.49\textwidth}
        \centering
         \includegraphics[width=\linewidth]{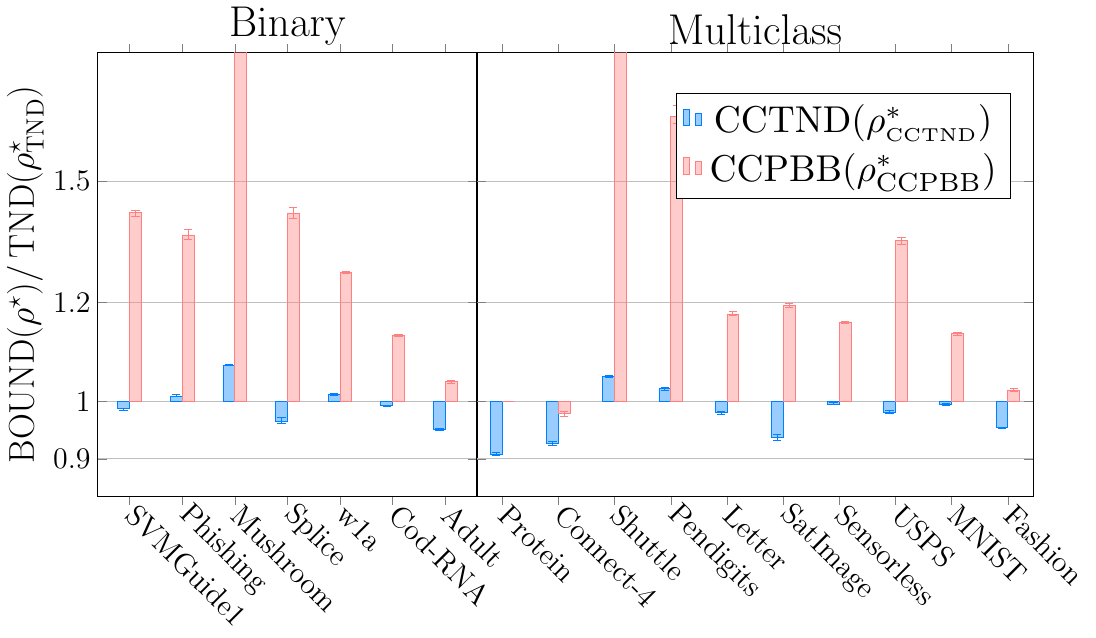}\\
        \caption{}
        \label{fig:mce:bounds}
    \end{subfigure}
    \caption{
    \textbf{(a,b) Optimized weighted random forest.}
    \textbf{(c,d) Ensembles with heterogeneous classifiers.}
    The median, 25\%, and 75\% quantiles of: (a,c) the ratio $\hat L(\MV_{\rho^*}, \testset) / \hat L(\MV_u,\testset)$ of the test loss of the majority vote with optimized weighting $\rho^*$ generated by $\TND$, $\CMUTND$ and $\COTND$ to the test loss of majority vote with uniform weighting, and (b,d) the ratio $\BOUND(\rho^\star)/\TND(\rho^\star_{\scriptscriptstyle{\TND}} )$ of the $\CMUTND$ and $\COTND$ bounds to the $\TND$ bound with the corresponding optimized weighting . The plots are on a logarithmic scale. Values above 1 represent degradation and values below 1 represent improvement.
    Data sets with $L(\MV_u,\testset)= 0$ are left out in (a,c).
    }
    \label{fig:plots}
\end{figure}

\paragraph{Ensembles with heterogeneous classifiers.}
In the second experiment, we consider ensembles of heterogeneous classifiers (Linear Discriminant Analysis, $k$-Nearest Neighbors, Decision Tree, Logistic Regression, and Gaussian Naive Bayes). A detailed description is provided in Appendix \ref{app:experiments:heterogeneous}. Compared to random forests, the variation in performance of ensemble members is larger here. Figure~\ref{fig:mce:risks} compares the ratio of the loss of the majority vote with optimized weighting to the loss of majority vote with uniform weighting on $\testset$ for $\rho^*$ found by minimization of the first order bound ($\FO$), $\TND$, $\CMUTND$, and $\COTND$. The numerical values are given in Table~\ref{tab:mce:risks} in Appendix~\ref{app:experiments:heterogeneous}. We observed that optimizing the $\FO$ tends to improve the ensemble accuracy in some cases but degrade in others. However, $\TND$, $\CMUTND$, and $\COTND$ almost always improve the performance w.r.t.\ the uniform weighting.  Table~\ref{tab:mce:risks} also shows that choosing the best single hypothesis gives almost identical results as optimizing $\FO$. 
Figure~\ref{fig:mce:bounds} compares the tightness of the $\CMUTND$ and $\COTND$ bounds relative to the $\TND$ bound. The numerical values are given in Table~\ref{tab:mce:bounds} in Appendix~\ref{app:experiments:heterogeneous}. In this case, we have that $\CMUTND$ is usually tighter than $\TND$, while $\COTND$ is usually looser than $\TND$ due to the union bounds.

%Numerical values for loss and bounds can be found in Table~\ref{tab:mce:opt_mvrisk} and Table~\ref{tab:mce:opt_bounds} in the supplementary material.

%\begin{figure}[t]
%    \begin{subfigure}[b]{0.49\textwidth}
%        \centering
%        \includegraphics[width=\linewidth]{experiments/mce/mvrisk_optimized_comparison_mce_uniform}
%        \caption{}
%        \label{fig:mce:risks}
%    \end{subfigure}
%    \hfill
%    \begin{subfigure}[b]{0.49\textwidth}
%        \centering
%         \includegraphics[width=\linewidth]{}\\
%        \caption{}
%        \label{fig:mce:bounds}
%    \end{subfigure}
%    
%    \caption{\textbf{Ensembles of multiple heterogeneous classifiers.} The median, 25\%, and 75\% quantiles of (a) the ratio $\hat L(\MV_{\rho^*}, \testset) / \hat L(\MV_u,\testset)$ of the test loss of majority vote with optimized weighting $\rho^*$ generated by $\FO$, $\TND$, $\CMUTND$ and $\COTND$, and (b) the ratio $\BOUND(\rho^\star)/\TND(\rho^\star_{\scriptscriptstyle{\TND}} )$ of the $\CMUTND$ and $\COTND$ bounds wrt the $\TND$ bound with the corresponding optimized weighting. The plots are on a logarithmic scale. Values above 1 represent degradation in performance on new data and values below 1 represent an improvement. }
%    \label{fig:mce}
%\end{figure} 

\section{Discussion}

We derived an optimization-friendly form of the Chebyshev-Cantelli inequality and applied it %. The inequality was applied 
to derive two new forms of second order oracle bounds for the weighted majority vote. The new oracle bounds bridge between the % prior work on the 
C-bounds \citep{GLL+15} and the tandem bound \citep{MLIS20} and take the best of both:  %they have 
the tightness of the Chebyshev-Cantelli inequality %, thus improving on the tandem bound, 
and the optimization and estimation convenience of the tandem bound. %, thus resolving a considerable challenge faced by the prior work on C-bounds. 
%As a side result %for estimation of one of the oracle bounds 
We also derived the PAC-Bayes-Bennett inequality, %which improves 
improving on the PAC-Bayes-Bernstein inequality of \citet{SLCB+12}.

Our paper opens several directions for future research. One of them is a better treatment of parameter search in parametric bounds that would give tighter bounds than a union bound over a grid. It would also be interesting to find other applications for the new form of Chebyshev-Cantelli inequality and the PAC-Bayes-Bennett inequality.

\begin{ack}
We thank the anonymous reviewers, as well as Tim van Erven, Wouter Koolen, and Peter Gr{\" u}nwald for their constructive feedback, references, and for pointing out that there is no need in a union bound over a grid of $\mu$ in Theorems~\ref{thm:first-mu-bound} and \ref{thm:second-mu-bound}, and that negative $\mu$ in Theorem~\ref{thm:first-mu-bound} requires a separate treatment.

This project has received funding from European Union’s Horizon 2020 research and innovation
programme under the Marie Skłodowska-Curie grant agreement No 801199. YW and YS acknowledge support by the Independent Research Fund Denmark, grant number 0135-00259B. SSL acknowledges funding by the Danish Ministry of Education and Science, Digital Pilot Hub and Skylab Digital. CI acknowledges support by the Villum Foundation through the project Deep Learning and Remote Sensing for Unlocking Global Ecosystem Resource Dynamics (DeReEco). AM is funded by the Spanish Ministry of Science, Innovation and Universities and by the regional government of Andaluc\'ia, grant numbers PID2019-106758GB-C32 and P20-00091, respectively, and by FEDER funds. 
\end{ack}

%\section{Limitations and negative societal impact}
%\label{sec:lim}

%We emphasize that our results assume that the data are i.i.d.\ and that the theorems only hold under the assumptions provided in their statements. Majority vote is a meta-algorithm and, as such, may amplify negative societal impact whenever the base algorithms have a negative societal impact.

\bibliography{bibliography}
\bibliographystyle{plainnat}

\clearpage

\pagebreak

\appendix

\iffalse
\section*{A correction to Figure~\ref{fig:M2vsCC}.(b,d)}

While preparing the appendix we have discovered a minor bug in the code. A corrected version of Figure~\ref{fig:M2vsCC}.(b,d) is provided in Figure~\ref{fig:plots_correct}.

\begin{figure}[h]
    \centering
    \renewcommand{\thesubfigure}{b}
    \begin{subfigure}[b]{0.49\textwidth}
        \centering
        \includegraphics[width=\textwidth]{experiments/RF/bounds_optimized_comparison_correct.pdf}
        \caption{}
        \label{fig:RFopt:bounds_correct}
    \end{subfigure}
    \renewcommand{\thesubfigure}{d}
    \begin{subfigure}[b]{0.49\textwidth}
        \centering
         \includegraphics[width=\linewidth]{experiments/mce/bounds_optimized_comparison_mce_fixed.pdf}\\
        \caption{}
        \label{fig:mce:bounds_correct}
    \end{subfigure}
    \caption{
    \textbf{(b) Optimized weighted random forest.}
    \textbf{(d) Ensembles with heterogeneous classifiers.}
    The median, 25\%, and 75\% quantiles of: the ratio $\BOUND(\rho^\star)/\TND(\rho^\star_{\scriptscriptstyle{\TND}} )$ of the $\CMUTND$ and $\COTND$ bounds to the $\TND$ bound with the corresponding optimized weighting . The plots are on a logarithmic scale. Values above 1 represent degradation and values below 1 represent improvement.
    }
    \label{fig:plots_correct}
\end{figure}

\renewcommand\thefigure{\thesection.\arabic{figure}} 
\renewcommand\thetable{\thesection.\arabic{table}} 
\renewcommand\theequation{\thesection.\arabic{equation}} 
\fi

\section{A proof of the PAC-Bayes-Bennett inequality (Theorem~\ref{thm:PBBennett}) and  a comparison with the PAC-Bayes-Bernstein inequality}
\label{app:Bennett}

In this section we provide a proof of Theorem~\ref{thm:PBBennett} and a numerical comparison with the PAC-Bayes-Bernstein inequality. The proof is based on the standard change of measure argument. We use the following version by \citet{TS13}.

\begin{lemma}[PAC-Bayes Lemma]
\label{lem:PAC-Bayes}
For any function $f_n:\HH\times ({\cal X}\times{\cal Y})^n\to \R$ and for any distribution $\pi$ on $\HH$, such that $\pi$ is independent of $S$, with probability at least $1-\delta$ over a random draw of $S$, for all distributions $\rho$ on $\HH$ simultaneously:
\[
\E_\rho[f_n(h,S)] \leq \KL(\rho\|\pi) + \ln \frac{1}{\delta} + \ln\E_\pi[\E_{S'}[e^{f_n(h,S')}]].
\]
\end{lemma}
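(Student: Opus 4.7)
The plan is to prove this as the classical PAC-Bayes lemma in two clean steps: a deterministic change-of-measure inequality that holds for every $\rho$, followed by a single application of Markov's inequality to a $\rho$-independent quantity. The uniformity over $\rho$ comes for free from the fact that the randomness in $S$ is absorbed into a quantity that does not depend on $\rho$.

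First I would establish the change-of-measure step: for any fixed realization of $S$ and any $\rho$ absolutely continuous with respect to $\pi$,
\[
\E_\rho[f_n(h,S)] \leq \KL(\rho\|\pi) + \ln \E_\pi[e^{f_n(h,S)}].
\]
This is the Donsker--Varadhan variational inequality, which I would derive in a self-contained way by writing $\E_\rho[f_n(h,S)] = \E_\rho\!\left[\ln \frac{d\rho}{d\pi} + \ln\frac{e^{f_n(h,S)}\,d\pi}{d\rho}\right]$, recognizing the first term as $\KL(\rho\|\pi)$, and applying Jensen's inequality to the concave $\ln$ on the second term to obtain $\E_\rho\!\left[\ln\frac{e^{f_n(h,S)}\,d\pi}{d\rho}\right] \leq \ln \E_\pi[e^{f_n(h,S)}]$. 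Crucially, this inequality holds simultaneously for all $\rho$ without any randomness being consumed.

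Next I would handle the $S$-randomness by applying Markov's inequality to the nonnegative random variable $W(S) = \E_\pi[e^{f_n(h,S)}]$. Since $\pi$ is independent of $S$, Fubini's theorem gives $\E_S[W(S)] = \E_\pi\bigl[\E_{S'}[e^{f_n(h,S')}]\bigr]$, and Markov yields, with probability at least $1-\delta$ over $S$,
\[
\E_\pi[e^{f_n(h,S)}] \leq \tfrac{1}{\delta}\,\E_\pi\bigl[\E_{S'}[e^{f_n(h,S')}]\bigr].
\]
Taking logarithms and substituting into the deterministic inequality above gives the claim. The bound over all $\rho$ is preserved because the high-probability event controls the single quantity $\E_\pi[e^{f_n(h,S)}]$, which is independent of $\rho$.

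There is no real obstacle here beyond being careful about measurability and absolute continuity: if $\rho$ is not absolutely continuous with respect to $\pi$, then $\KL(\rho\|\pi) = \infty$ and the inequality is trivial, so it suffices to argue in the absolutely continuous case. I would also note that Fubini requires $\E_{S'}[e^{f_n(h,S')}]$ to be $\pi$-integrable, which is implicit in the statement since otherwise the right-hand side is $+\infty$ and the bound is vacuous. With these minor caveats addressed, the two-line argument — change of measure plus one Markov step — delivers the lemma in the exact form stated.
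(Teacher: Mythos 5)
Your proof is correct and is the standard argument: the paper does not prove this lemma itself but cites it from \citet{TS13}, and the proof there is exactly your two-step route of Donsker--Varadhan change of measure followed by Markov's inequality applied to the $\rho$-independent quantity $\E_\pi[e^{f_n(h,S)}]$. Your caveats on absolute continuity and integrability are handled appropriately, so nothing is missing.
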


The second ingredient is Bennett's lemma, which is a bound on the moment generating function used in the proof of Bennett's inequality. Since we are unaware of a reference, we provide a proof below, which is essentially an intermediate step in the proof of Bennett's inequality \citep[Theorem 2.9]{BLM13}.
\begin{lemma}[Bennett's Lemma]
\label{lem:Bennett}
Let $b>0$ and let $Z_1,\dots,Z_n$ be i.i.d.\ \emph{zero-mean} random variables with finite variance, such that $Z_i \leq b$ for all $i$. Let $M_n = \sum_{i=1}^n Z_i$ and $V_n = \sum_{i=1}^n \E[Z_i^2]$. Let $\phi(u)=e^u-u-1$. Then for any $\lambda>0$:
\[
\E[e^{\lambda M_n - \frac{\phi(b\lambda)}{b^2} V_n}]\leq 1.
\]
\end{lemma}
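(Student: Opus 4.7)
The plan is to reduce the multiplicative moment-generating-function bound to a single-variable bound via independence, and then to establish that single-variable bound using a standard monotonicity property of the function $u\mapsto (e^u-1-u)/u^2$.

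First I would peel off the deterministic factor: since $V_n = \sum_{i=1}^n \E[Z_i^2]$ is not random, the claim is equivalent to showing $\E[e^{\lambda M_n}] \leq \exp\!\bigl(\tfrac{\phi(b\lambda)}{b^2} V_n\bigr)$. By independence of the $Z_i$ this factorizes as $\prod_{i=1}^n \E[e^{\lambda Z_i}]$ on the left and $\prod_{i=1}^n \exp\!\bigl(\tfrac{\phi(b\lambda)}{b^2}\E[Z_i^2]\bigr)$ on the right, so it suffices to prove the per-variable inequality
\[
\E[e^{\lambda Z_i}] \leq \exp\!\Bigl(\tfrac{\phi(b\lambda)}{b^2}\E[Z_i^2]\Bigr).
\]

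The heart of the argument is a pointwise inequality: for every real $x \leq b$ and every $\lambda>0$,
\[
e^{\lambda x} \;\leq\; 1 + \lambda x + \frac{\phi(b\lambda)}{b^2}\, x^{2}.
\]
To see this, observe that $g(u) := (e^u - 1 - u)/u^2 = \phi(u)/u^2$ (extended continuously at $0$ by $1/2$) is nondecreasing on all of $\mathbb{R}$, which follows directly from the power series $g(u)=\sum_{k\geq 0} u^k/(k+2)!$ with nonnegative coefficients after grouping adjacent terms (or by a derivative check). Applying this with $u=\lambda x \leq \lambda b$ yields $(e^{\lambda x}-1-\lambda x)/(\lambda x)^2 \leq \phi(\lambda b)/(\lambda b)^2$, and multiplying through by $(\lambda x)^2$ gives the pointwise bound.

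Taking expectation and using $\E[Z_i]=0$, the pointwise bound becomes $\E[e^{\lambda Z_i}] \leq 1 + \frac{\phi(b\lambda)}{b^2}\E[Z_i^2]$. Applying $1+t \leq e^t$ to the right-hand side gives the per-variable inequality displayed above; multiplying over $i$ closes the argument. The only non-routine step I anticipate is choosing the cleanest justification of monotonicity of $\phi(u)/u^2$; the power-series representation with nonnegative coefficients avoids case analysis on the sign of $u$ and handles both positive and negative values of $Z_i$ uniformly, which is essential since only the upper bound $Z_i\leq b$ is assumed.
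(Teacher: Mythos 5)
Your proof is correct and follows essentially the same route as the paper's: reduce to a per-variable bound via independence, apply the pointwise inequality $e^{\lambda x}-\lambda x-1\leq \frac{\phi(b\lambda)}{b^2}x^2$ for $x\leq b$ (justified by monotonicity of $u\mapsto \phi(u)/u^2$ on all of $\R$), then use $\E[Z_i]=0$ and $1+t\leq e^t$. The only difference is that you additionally sketch why $\phi(u)/u^2$ is nondecreasing, which the paper simply asserts.
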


\begin{proof}
Since $u^{-2}\phi(u)$ is a non-decreasing function of $u\in\R$ (where at zero we continuously extend the function), for all $i\in[n]$ and $\lambda>0$ we have
\begin{equation*}
    e^{\lambda Z_i}-\lambda Z_i -1 \leq Z_i^2\frac{\phi(b\lambda)}{b^2},
\end{equation*}
which implies
\begin{equation*}
    \E[e^{\lambda Z_i}]\leq  1+ \lambda\E[Z_i] + \frac{\phi(b\lambda)}{b^2} \E[Z_i^2]\leq e^{\frac{\phi(b\lambda)}{b^2} \E[Z_i^2]},
\end{equation*}
where the second inequality uses the assumption that $\E[Z_i]=0$ and the fact that $1+x\leq e^x$ for all $x\in \R$. By the above inequality and independence of the random variables,
\begin{align*}
    \E[e^{\lambda M_n-\frac{\phi(b\lambda)}{b^2}V_n}] &= \E[\prod_{i=1}^n e^{\lambda Z_i - \frac{\phi(b\lambda)}{b^2} \E[Z_i^2] }] = \prod_{i=1}^n \E[e^{\lambda Z_i - \frac{\phi(b\lambda)}{b^2} \E[Z_i^2] }] \leq 1.
\end{align*}
\end{proof}

Now we are ready to prove the theorem.
\begin{proof}[Proof of Theorem~\ref{thm:PBBennett}]
We take $f_n(h,S)=\gamma n\lr{\tilde L(h)- \hat{\tilde L}(h,S)} - \frac{\phi(\gamma b)}{b^2} n \tilde \V(h)$. Then by Lemma~\ref{lem:Bennett} we have $\E_S[e^{f_n(h,S)}]\leq 1$. By plugging this into Lemma~\ref{lem:PAC-Bayes}, normalizing by $\gamma n$, and changing sides, we obtain the result.
\end{proof}

\iffalse
\subsection{PAC-Bayes-Bennett bound minimization details}

For a fixed $\rho$ the PAC-Bayes-Bennett bound is convex in $\gamma$ and the optimal value of $\gamma$ is given by 
\[
\gamma_\rho^* = W_0\lr{\frac{1}{e}\lr{\frac{\KL(\rho\|\pi) + \ln \frac{1}{\delta}}{n \E_\rho[\tilde \V(h)]} - 1}} + 1,
\]
where $W_0$ is the principal branch of the Lambert W function, which is defined as the inverse of the function $f(x) = x e^x$. The result can be verified by taking the first two derivatives of the bound with respect to $\gamma$.

For a fixed $\gamma$ the PAC-Bayes-Bennett bound is convex in $\rho$ and the optimal $\rho$ is given by
\[
\rho_\gamma^*(h) = \frac{\pi(h)e^{-\gamma n \hat{\tilde L}(h,S) - \frac{\phi(\gamma b)}{b^2}\tilde \V(h)}}{\E_\pi\lrs{e^{-\gamma n \hat{\tilde L}(h,S) - \frac{\phi(\gamma b)}{b^2}\tilde \V(h)}}},
\]
which can also be verified by taking the first two derivatives with respect to $\rho$.

We recall that the bound is not necessarily jointly convex in $\rho$ and $\gamma$ and alternating minimization with respect to $\rho$ and $\gamma$ is only guaranteed to converge to a local minimum. We also recall that in case of minimization a union bound over a grid of $\gamma$ values has to be applied and only the $\gamma$ values from the grid can be taken.

\fi

\subsubsection*{Numerical comparison of the PAC-Bayes-Bennett and PAC-Bayes-Bernstein bound}%\label{app:Bennett_vs_Bernstein}
Figure~\ref{fig:Bennett_vs_Bernstein} provides a numerical comparison of the PAC-Bayes-Bennett and PAC-Bayes-Bernstein inequalities (Theorem~\ref{thm:PBBennett} and Theorem~7 by \citet{TS13}).
\begin{figure}[h]
    \centering
    \begin{subfigure}[b]{0.49\textwidth}
        \centering
        \includegraphics[width=\linewidth]{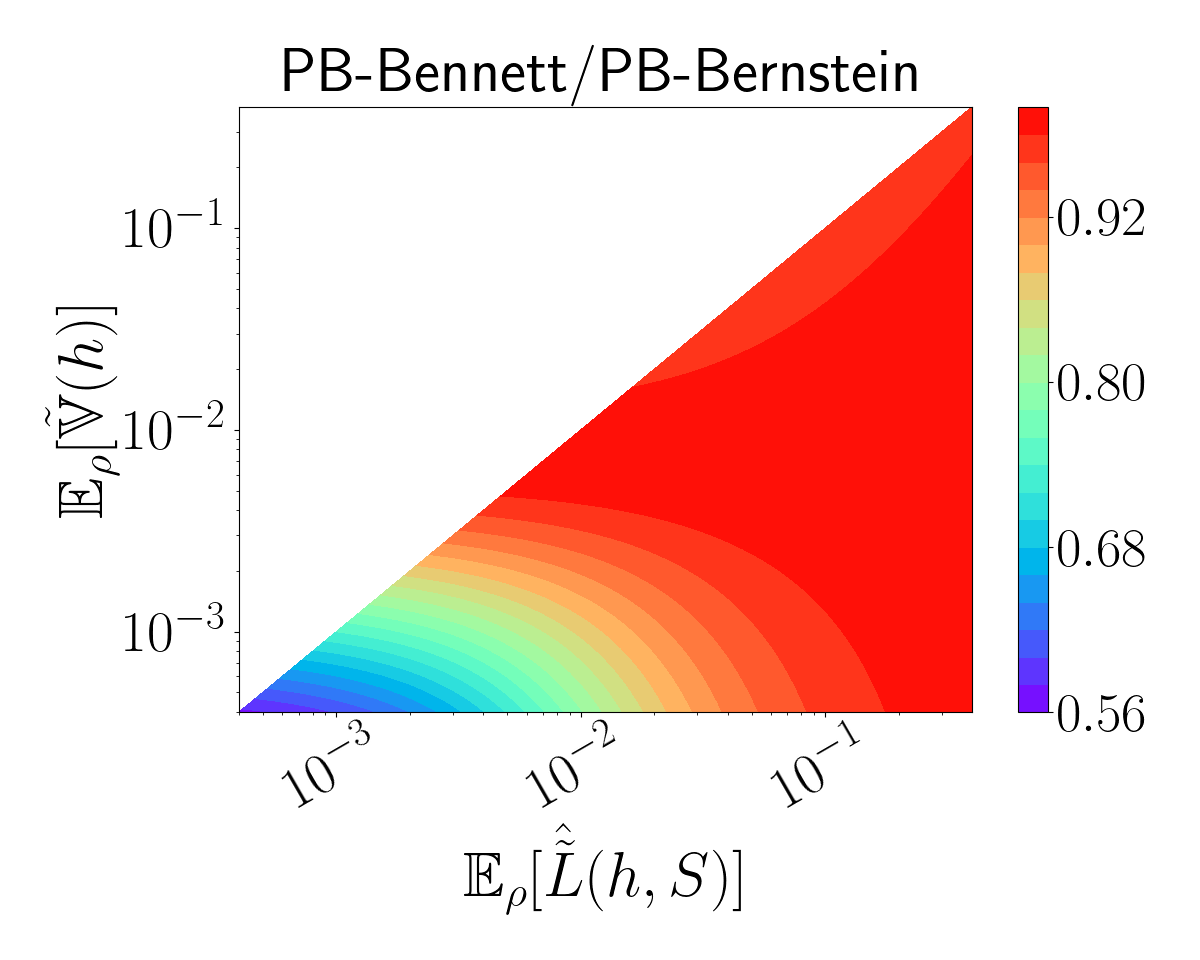}
        \caption{$n=1000$}
        \label{fig:Ben_vs_Bernstein_1000}
    \end{subfigure}
    \hfill
    \begin{subfigure}[b]{0.49\textwidth}
        \centering
         \includegraphics[width=\linewidth]{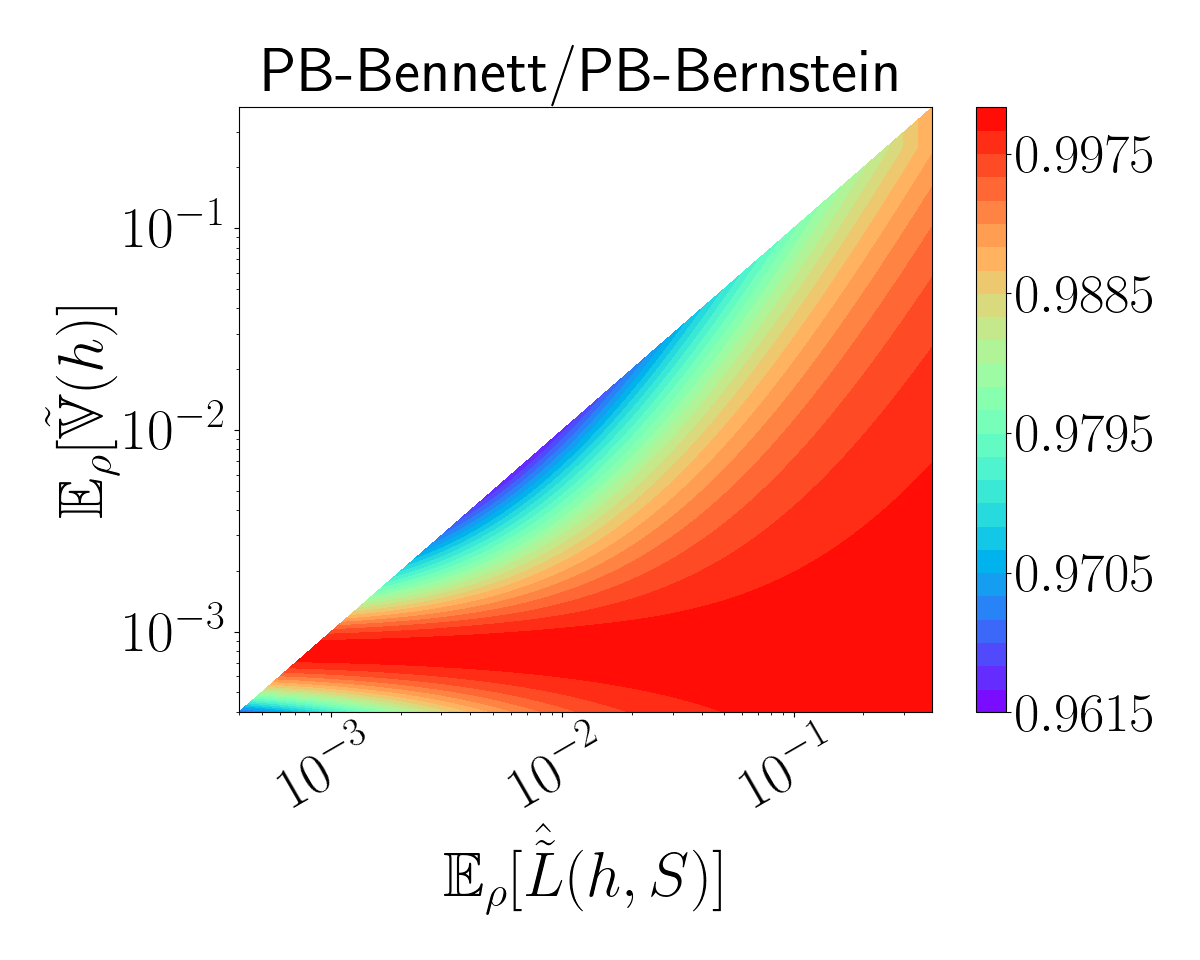}\\
        \caption{$n=10000$}
        \label{fig:Ben_vs_Berntein_10000}
    \end{subfigure}
    \caption{The ratio of PAC-Bayes Bennett to PAC-Bayes Bernstein bound as a function of $\E_\rho[\hat{\tilde L}(h,S)]$ and $\E_\rho[\tilde \V(h)]$. We set $\KL(\rho\|\pi)=5$ and $\delta=0.05$. The value of $n$ is provided in the captions of the subfigures.}
    \label{fig:Bennett_vs_Bernstein}
\end{figure}

\section{Proof of Lemma~\ref{lem:range}}
\label{app:range}

\begin{proof}
Recall that
\begin{equation*}
    \ell_\mu(h(X),h'(X),Y)= (\1[h(X)\neq Y]-\mu)(\1[h'(X)\neq Y]-\mu) \in \lrc{(1-\mu)^2,-\mu(1-\mu),\mu^2}.
\end{equation*}
For $\mu<0.5$, we have $-\mu(1-\mu)<(1-\mu)^2$ and $\mu^2<(1-\mu)^2$.  Therefore, $\ell_\mu(h(X),h'(X),Y)\leq (1-\mu)^2$.

Furthermore, for $\mu < 0$ we have $\mu^2<-\mu(1-\mu)$, and for $\mu > 0$ we have $-\mu(1-\mu)\leq \mu^2$. Therefore, for $\mu<0.5$ we have
$\ell_\mu(h(X),h'(X),Y)\geq \min\{-\mu(1-\mu), \mu^2 \}$.

By combining the upper and the lower bound, we obtain
\begin{align*}
    K_\mu & = (1-\mu)^2-\min\{-\mu(1-\mu), \mu^2 \}\\
    &=\max \{(1-\mu)^2-(-\mu(1-\mu)), (1-\mu)^2-\mu^2 \}\\
    &=\max\{1-\mu, 1-2\mu \}.
\end{align*}
\end{proof}

\section{Comparison of the first and second order oracle bounds}
\label{app:FOvsSO}

In this section we show that if $\E_\rho[L(h)] < 0.5$ and $\E_{\rho^2}[L(h,h')] > 0.5 \E_\rho[L(h)]$, then the first order oracle bound is tighter than the second order oracle bounds, and if $\E_\rho[L(h)] < 0.5$ and $\E_{\rho^2}[L(h,h')] < 0.5 \E_\rho[L(h)]$, then it is the other way around.

For comparison of the first order oracle bound $L(\MV_\rho)\leq 2\E_\rho[L(h)]$ vs.\ the second order oracle tandem bound $L(\MV_\rho)\leq 4\E_{\rho^2}[L(h,h')]$ the statement above is evident.

For the second order oracle bounds based on the  Chebyshev-Cantelli inequality we have
\begin{align*}
    \frac{\E_{\rho^2}[L(h,h')] - \E_\rho[L(h)]^2}{0.25 + \E_{\rho^2}[L(h,h')] - \E_\rho[L(h)]} ~~~~~&\text{vs.}~~~~~2\E_\rho[L(h)],\\
    \E_{\rho^2}[L(h,h')] - \E_\rho[L(h)]^2 ~~~~~&\text{vs.}~~~~~ 0.5\E_\rho[L(h)] + 2\E_\rho[L(h)]\E_{\rho^2}[L(h,h')] - 2 \E_\rho[L(h)]^2,\\
    \E_{\rho^2}[L(h,h')] (1- 2 \E_\rho[L(h)]) ~~~~~&\text{vs.}~~~~~ 0.5 \E_\rho[L(h)](1 - 2 \E_\rho[L(h)]),\\
    \E_{\rho^2}[L(h,h')] ~~~~~&\text{vs.}~~~~~ 0.5 \E_\rho[L(h)],
\end{align*}
where under the assumption that $\E_\rho[L(h)] < 0.5$ we can cancel $(1-2\E_\rho[L(h)])$, since it is positive, and the result is again evident.

\section{Minimization of the bounds}\label{app:minimization_details}

In this section we provide technical details on minimization of the bounds in Theorems~\ref{thm:first-mu-bound} and \ref{thm:second-mu-bound}. As most of the other PAC-Bayesian works, we take $\pi$ to be a union distribution over the hypotheses in both cases.
As discussed in Section \ref{sec:exper}, we build a set of data-dependent hypotheses by splitting the data set $S$ into pairs of subsets $S = T_h \cup S_h$, such that $T_h \cap S_h = \emptyset$, training $h$ on $T_h$ and calculating an unbiased loss estimate $\hat L(h, S_h)$ on $S_h$. For tandem losses we compute the unbiased estimates $\hat L(h,h',S_h\cap S_{h'})$ on the intersections of the corresponding sets $S_h$ and $S_{h'}$. 

\subsection{Minimization of the bound in Theorem \ref{thm:first-mu-bound}} \label{app:minimization_cmutnd}
The adjustment of the bound from Theorem~\ref{thm:first-mu-bound} to this construction is for $\mu\geq 0$:
\begin{multline*}
L(\MV_\rho)\leq \frac{1}{(0.5-\mu)^2}\bigg[\frac{\E_{\rho^2}[\hat L(h,h',S_h\cap S_{h'})]}{1 - \frac{\lambda}{2}} + \frac{2\KL(\rho\|\pi) + \ln(4 \sqrt m/\delta)}{\lambda\lr{1-\frac{\lambda}{2}}m}\\ 
- 2\mu\lr{\lr{1 - \frac{\gamma}{2}}\E_\rho[\hat L(h,S_h)] - \frac{\KL(\rho\|\pi) + \ln(4 \sqrt n/\delta)}{\gamma n}} + \mu^2\bigg],
\end{multline*}
and for $\mu<0$:
\begin{multline*}
L(\MV_\rho)\leq \frac{1}{(0.5-\mu)^2}\bigg[\frac{\E_{\rho^2}[\hat L(h,h',S_h\cap S_{h'})]}{1 - \frac{\lambda}{2}} + \frac{2\KL(\rho\|\pi) + \ln(4 \sqrt m/\delta)}{\lambda\lr{1-\frac{\lambda}{2}}m}\\ 
- 2\mu\lr{ \frac{\E_{\rho}[\hat L(h,S_h)]}{1 - \frac{\gamma}{2}} + \frac{\KL(\rho\|\pi) + \ln(4\sqrt n/\delta)}{\gamma\lr{1-\frac{\gamma}{2}}n}} + \mu^2\bigg],
\end{multline*}
where $m = \min_{h,h'}|S_h\cap S_{h'}|$ and $n = \min_h |S_h|$. Below we provide the pseudocode and derive update rules for $\mu$, $\lambda$, $\gamma$, and $\rho$ for alternating minimization of this bound.

\begin{algorithm}[H]
	\caption{Minimization of the bound in Theorem~\ref{thm:first-mu-bound}}
	\label{alg:CCTND}
\begin{algorithmic}
	\STATE {\bfseries Input: }{$m, n$, tandem losses $\hat L(h,h',S_h\cap S_{h'})$ for all  $h,h'$, and Gibbs losses $\hat L(h,S_h)$ for all $h$ }\\
	\STATE {\bfseries Initialize: }{$\rho=\pi$ and $\mu=0$}\\
	\WHILE{The improvement of the bound is larger than $10^{-9}$}
		\STATE Compute $\lambda_\rho^*$, the optimal $\lambda$ given $\rho$ \\
		\STATE Compute $\gamma_\rho^*$, the optimal $\gamma$ given $\rho$ and $\mu$ \\
		\STATE Compute the bound using $\rho$, $\mu$, $\lambda_\rho^*$ and $\gamma_\rho^*$ \\
		\STATE Compute new $\mu_\rho^*$, the optimal $\mu$ given $\rho$, $\lambda_\rho^*$ and $\gamma_\rho^*$ \\
		\STATE Update the new distribution $\rho'$ with gradient descent given $\mu$, $\lambda_\rho^*$ and $\gamma_\rho^*$ \\
		\STATE Let $\rho=\rho'$ and $\mu=\mu_\rho^*$
	\ENDWHILE
\end{algorithmic}
\end{algorithm}

\paragraph{Optimal $\lambda$ given $\rho$}
Minimization of the bound with respect to $\lambda$ is identical to minimization of the tandem bound by \citet[Theorem 9]{MLIS20}. \citeauthor{MLIS20} derive the optimal value of $\lambda$:
\begin{equation*}\label{eq:opt_lambda}
    \lambda_\rho^* = \frac{2}{\sqrt{\frac{2m\E_{\rho^2}[\hat L(h,h',S_h\cap S_{h'})]}{2\KL(\rho||\pi)+\ln \frac{4\sqrt{m}}{\delta}}+1}+1}.
\end{equation*}

\paragraph{Optimal $\gamma$ given $\rho$ and $\mu$}
Minimization of the bound with respect to $\gamma$ in the case of $\mu\geq 0$ is analogous to minimization of the bound by \citet[Theorem 10]{MLIS20} with respect to $\gamma$. \citeauthor{MLIS20} derive the optimal value of $\gamma$:

\begin{equation*}\label{eq:opt_gamma_+}
    \gamma_\rho^* = \sqrt{\frac{2\KL(\rho||\pi) + \ln(16n/\delta^2)}{n\E_\rho[\hat L(h,S_h)] }}.
\end{equation*}
On the other hand, the optimal $\gamma$ in the case of $\mu<0$ is analogous to the optimal $\lambda$ above:
\begin{equation*}\label{eq:opt_gamma_-}
    \gamma_\rho^* = \frac{2}{\sqrt{\frac{2n\E_\rho[\hat L(h,S_h)]}{\KL(\rho||\pi)+\ln \frac{4\sqrt{n}}{\delta}}+1}+1}.
\end{equation*}

\paragraph{Optimal $\mu$ given $\rho$}
Given $\rho$, we can compute the optimal $\lambda_\rho^*$ and $\gamma_\rho^*$ by the above formulas. Let \begin{align*}
    U_T(\rho)&:=\frac{\E_{\rho^2}[\hat L(h,h',S_h\cap S_{h'})]}{1 - \frac{\lambda_\rho^*}{2}} + \frac{2\KL(\rho\|\pi) + \ln(4 \sqrt m/\delta)}{\lambda_\rho^*\lr{1-\frac{\lambda_\rho^*}{2}}m},\\
    L_G(\rho)&:=\left\{\begin{array}{cc}
       \lr{1 - \frac{\gamma_\rho^*}{2}}\E_\rho[\hat L(h,S_h)] - \frac{\KL(\rho\|\pi) + \ln(4 \sqrt n/\delta)}{\gamma_\rho^* n},  & \mu\geq 0 \\
       \frac{\E_{\rho}[\hat L(h,S_h)]}{1 - \frac{\gamma_\rho^*}{2}} + \frac{\KL(\rho\|\pi) + \ln(4\sqrt n/\delta)}{\gamma_\rho^*\lr{1-\frac{\gamma_\rho^*}{2}}n},  & \mu<0
    \end{array}\right.
\end{align*}
Then the optimal $\mu$ is
\begin{equation*}
    \mu_\rho^* = \frac{\frac{1}{2}L_G(\rho)-U_T(\rho)}{\frac{1}{2}-L_G(\rho)}.
\end{equation*}
%We recall that although there is a closed-form solution, in case of minimizing over a grid of $\mu$ values, only the $\mu$ values from the grid can be taken. Thus, we have to take the closest value in the grid to the optimal $\mu$ we end up with.

\paragraph{Gradient w.r.t.\ $\rho$ given $\lambda$, $\gamma$ and $\mu$}
Minimization of the bound w.r.t.\ $\rho$ is equivalent to constrained optimization of $f(\rho) = a\E_{\rho^2}[\hat L(h,h',S_h\cap S_{h'})]-2b\E_{\rho}[\hat L(h,S_h)]+2c\KL(\rho||\pi)$, where for $\mu\geq 0$, $a=1/(1-\lambda/2)$, $b=\mu(1-\gamma/2)$ and $c=1/(\lambda(1-\lambda/2)m)+\mu/(\gamma n)$, and for $\mu<0$, $a=1/(1-\lambda/2)$,  $b=\mu/(1-\gamma/2)$, and $c=1/(\lambda(1-\lambda/2)m)-\mu/(\gamma(1-\gamma/2) n)$. The constraint is that $\rho$ is a probability distribution. We optimize $\rho$ by projected gradient descent, where we iteratively take steps in the direction of the negative gradient of $f$ and project the result onto the probability simplex.
%To allow unconstrained optimization, but ensure that $\rho$ is a proper probability distribution at the same time, we parameterized $\rho$ by the method described in the following. Let $M=|\cal{H}|$ denote the number of ensemble members. We use $\tilde \rho$ to denote an unnormalized version of $\rho$, so that $\rho_i=\frac{\exp \tilde{\rho}_i}{\sum_{j=1}^M \exp \tilde \rho_j}$ for $i=1,\cdots,M$.

We use $\hat L$ to denote the vector of empirical losses and $\hat L_{\operatorname{tnd}}$ to denote the matrix of tandem losses. Let $\nabla f$  denote the gradient of $f$ w.r.t.\ $\rho$ and $(\nabla f)_h$ the $h$-th coordinate of the gradient. We have:
\begin{align*}
    (\nabla f)_h &= 2\left(a \sum_{h'}\rho(h')\hat L(h,h',S_h\cap S_{h'}) - b\hat L(h,S_h) + c\left(1+\ln\frac{\rho(h)}{\pi(h)}\right)\right),\\
    \nabla f &= 2\left(a\hat L_{\operatorname{tnd}}\rho - b\hat L +c\left(1+\ln\frac{\rho}{\pi}\right)\right).
\end{align*}

\paragraph{Gradient descent optimization w.r.t.\ $\rho$}
 To optimize the weighting $\rho$, we
applied iRProp+ for the gradient based optimization, a first order method with adaptive individual
step sizes \citep{IH03,FI18}, until
the bound did not improve for 10 iterations.

%\paragraph{Alternating optimization procedure}
%We applied an alternating optimization procedure w.r.t.\ the weighting $\rho$, and the parameters $\lambda$, $\gamma$, and $\mu$. More explicitly, starting from $\rho=\pi$, we computed the corresponding $\lambda, \gamma$, as well as $\mu$. Then, given $ \lambda, \gamma$, and $\mu$, we applied iterative gradient-based optimization to $\rho$. The alternating procedure was stopped when the improvement in the bound was smaller than $10^{-9}$.

\iffalse
\paragraph{Quasi-convexity w.r.t. $\mu$ }
We show that our bound is quasi-convex w.r.t. $\mu$ for $\mu\neq 0.5$. To show quasi-convexity for a continuous twice differentiable one-parameter function, we show that the second derivative is positive at all singular points, i.e., the points where the first derivative is zero.
Let $F(\mu)$ denote our bound
\begin{equation*}
    F(\mu):=\frac{\E[Z^2]-2\mu\E[Z]+\mu^2}{(\frac{1}{2}-\mu)^2}.
\end{equation*}
Then,
\begin{align*}
    F'(\mu)=\frac{1}{(\frac{1}{2}-\mu)^3}\left((1-2\E[Z])\mu-\E[Z]+2\E[Z^2]\right)=0
\end{align*}
when $\mu=\frac{\E[Z]-2\E[Z^2]}{1-2\E[Z]}$. Also,
\begin{align*}
    F''(\mu)=\frac{1}{(\frac{1}{2}-\mu)^4}\left(\frac{1}{2}-4\E[Z]+6\E[Z^2]+2\mu(1-2\E[Z]) \right),
\end{align*}
where the numerator is always positive and the denominator at the singular point is
\begin{equation*}
    \frac{1}{2}-4\E[Z]+6\E[Z^2] + 2\E[Z]-4\E[Z^2]> 0
\end{equation*}
when $\E[Z^2]> \frac{1}{2}\left( \E[Z]-\frac{1}{4}\right)$, which always hold since $\E[Z^2]\geq \E[Z]^2$.
\fi

\subsection{Minimization of the bound in Theorem~\ref{thm:second-mu-bound}}

We start with the details on construction of the grid of $\mu$, $\lambda$ and $\gamma$.

\subsubsection{The $\mu$ grid for Theorem \ref{thm:second-mu-bound}}
We were unable to find a closed-form solution for minimization of the bound w.r.t.\ $\mu$ and applied a heuristic. Empirically we observed that the bound was quasiconvex in $\mu$ (we were unable to prove that it is always the case) and applied binary search for $\mu$ in the grid. Note that even if we take a grid of $\mu$, we don't need a union bound since the bound holds with high probability for all $\mu$ simultaneously.

We then consider the relevant range of $\mu$. By Theorem \ref{thm:CCmu}, we have $\mu < 0.5$. At the same time, $\mu^* = \frac{0.5\E_\rho[L(h)]-\E_{\rho^2}[L(h,h)]}{0.5-\E_\rho[L(h)]}$, and in Section~\ref{sec:exper} we have shown that the primary region of interest is where $\E_{\rho^2}[L(h,h')] < 0.5 \E_\rho[L(h)]$, which corresponds to $\mu^* > 0$. However, since $\E_{\rho^2}[L(h,h)]$ and $\E_\rho[L(h)]$ are unobserved and we use an upper bound for the first and a lower bound for the second instead, we take a broader range of $\mu$. By making a mild assumption that the upper bound for the tandem loss $\E_{\rho^2}[L(h,h')]$ is at most 0.25 and the lower bound for the Gibbs loss $\E_\rho[L(h)]$ is at most 0.5, we have $\mu \in [-0.5,0.5)$.
We take 400 uniformly spaced points in the selected range for the CCPBB bound.

\subsubsection{The $\lambda$ grid for Theorem \ref{thm:second-mu-bound}}

The parameter $\lambda$ comes from Theorem~\ref{thm:var-bound}. The theorem is identical to the result by \citet[Equation (15)]{TS13}, except rescaling, but rescaling happens on top of the bound and has no effect on the $\lambda$-grid. Therefore, we use the grid proposed by \citeauthor{TS13}. Namely, we take 
\[
\lambda_i = c_1^{i-1} \frac{2(n-1)}{n}\left(\sqrt{\frac{n-1}{\ln(1/\delta_1)}+1}+1\right)^{-1}
\]
for $i\in\{1,\dots,k_\lambda\}$ and \[k_\lambda=\left\lceil\frac{1}{\ln c_1}\ln\left(\frac{1}{2}\sqrt{\frac{n-1}{\ln(1/\delta_1)}+1}+\frac{1}{2}\right) \right\rceil.\]
In the experiments we took $c_1=1.05$ and $\delta_1=\delta/2$.

\subsubsection{The $\gamma$ grid for Theorem \ref{thm:second-mu-bound}}
The parameter $\gamma$ comes from Theorem~\ref{thm:PBBennett}. By taking the first two derivatives we can verify that for a fixed $\rho$ the PAC-Bayes-Bennett bound is convex in $\gamma$ and at the minimum point the optimal value of $\gamma$ satisfies
\[
e^{(\gamma_\rho^* b - 1)}\lr{\gamma_\rho^* b - 1} = \frac{1}{e}\lr{\frac{b^2\lr{\KL(\rho\|\pi) + \ln \frac{1}{\delta_2}}}{n \E_\rho[\tilde \V(h)]} - 1}.
\]
Thus, the optimal value of $\gamma$ is given by 
\[
\gamma_\rho^* = \frac{1}{b}\lr{ W_0\lr{\frac{1}{e}\lr{\frac{b^2\lr{\KL(\rho\|\pi) + \ln \frac{1}{\delta_2}}}{n \E_\rho[\tilde \V(h)]} - 1}} + 1},
\]
where $W_0$ is the principal branch of the Lambert W function, which is defined as the inverse of the function $f(x) = x e^x$.

In order to define a grid for $\gamma$ we first determine the relevant range for $\gamma_\rho^*$. We note that the variance $\E_\rho[\tilde \V(h)]$ is estimated using Theorem~\ref{thm:var-bound}, which assumes that the length of the range of the loss $\tilde \ell(\cdot,\cdot)$ is $c$. The loss range provides a trivial upper bound on the variance $
\E_\rho[\tilde \V(h)] \leq \frac{c^2}{4}$. At the same time, we have $\lambda\lr{1- \frac{\lambda n}{2(n-1)}}\leq \frac{n-1}{2n}$ (it is a downward-pointing parabola) and, therefore, the right hand side of the bound in Theorem~\ref{thm:var-bound} is at least the value of its second term, which is at least $\frac{2c^2\ln \frac{1}{\delta_1}}{n-1}$, since $\KL(\rho\|\pi)\geq 0$. Thus, we obtain that the estimate of $\E_\rho[\tilde \V(h)]$ is in the range $\lrs{\frac{2c^2\ln \frac{1}{\delta_1}}{n-1},\frac{c^2}{4}}$. We use $\Vmin=\frac{2c^2\ln \frac{1}{\delta_1}}{n-1}$ to denote the lower bound of this range.

Since $W_0(\cdot)$ is a monotonically increasing function, $\KL(\rho\|\pi)\geq 0$, and the estimate of $\E_\rho[\tilde \V(h)]$ is at most $\frac{c^2}{4}$, we obtain that $\gamma_\rho^*$ satisfies
\begin{align*}
    \gamma_\rho^* &= \frac{1}{b}\lr{ W_0\lr{\frac{1}{e}\lr{\frac{b^2\lr{\KL(\rho\|\pi) + \ln \frac{1}{\delta_2}}}{n \E_{\rho^2}[\tilde \V(h)]} - 1}} + 1}\\
    &\geq \frac{1}{b}\lr{W_0\lr{\frac{1}{e}\lr{\frac{4b^2}{nc^2}\ln \frac{1}{\delta_2} -1}}+1}\stackrel{def}{=}\gamma_{\min}.
\end{align*}

For an upper bound we observe that since $\E_\rho[\tilde L(h)] - \E_\rho[\hat{\tilde L}(h,S)]$ is trivially bounded by $b$, the bound in Theorem~\ref{thm:var-bound} is only interesting if it is smaller than $b$ and, in particular, $\frac{\phi(\gamma b)}{\gamma b^2}\E_\rho[\tilde \V(h)] \leq b$. 
% SIMPLE BOUND
\iffalse
For $x \geq 0$ we have $\phi(x)\geq \frac{1}{2}x^2$, which gives:
\[
b^2 \geq \frac{\phi(\gamma b)}{\gamma b}\E_\rho[\tilde \V(h)]
\geq \frac{(\gamma b)^2}{2\gamma b}\E_\rho[\tilde \V(h)]
\geq \frac{\gamma b c^2\ln \frac{1}{\delta_2}}{n-1},
\]
and leads to 
\[
\gamma_\rho^* \leq \frac{b (n-1)}{c^2\ln \frac{1}{\delta_2}}\stackrel{def}{=}\gamma_{max}.
\]
\fi
This gives
\[
b \geq \frac{\phi(\gamma b)}{\gamma b^2}\E_\rho[\tilde \V(h)] \geq \frac{\phi(\gamma b)}{\gamma b^2} \Vmin.
\]
Thus, $\gamma$ should satisfy
\[
\phi(\gamma b) \leq \frac{\gamma b^3}{\Vmin},
\]
which gives that the maximal value of $\gamma$, denoted $\gamma_{max}$, is the positive root of 
\[
H(\gamma) = e^{\gamma b} - \gamma b\lr{1+\frac{b^2}{\Vmin}} - 1 = 0.
\]

Let
$\alpha =  \left( 1 + b^2/\Vmin \right)^{-1}\in(0,1)$, and $x = - \gamma b - \alpha$.
Then the above problem is equivalent to finding the root of $f(x)=xe^x-d$ for $d=-\alpha e^{-\alpha}$, which can again be solved by applying the Lambert W function. Since for $\alpha\in(0,1)$, we have $d\in(-1/e, 0)$, which indicates that there are two roots \citep{CGHJK96}. We denote the root greater than $-1$ as $W_0(d)$ and the root less than $-1$ as $W_{-1}(d)$.
It is obvious that $W_0(d)=-\alpha$. However, $W_0(d)$ is not the desired solution, since for $b>0$, $x=-\alpha$ implies $\gamma=0$, but we assume $\gamma>0$. Hence, $W_{-1}(d)$ is the desired root, which gives the corresponding $\gamma = -\frac{1}{b}(W_{-1}(d)+\alpha)>0$. Thus, we obtain
\[
\gamma_{max}= -\frac{1}{b}\left(W_{-1}\left(-\frac{1}{1+\frac{b^2}{\Vmin} } \cdot e^{-\frac{1}{1+\frac{b^2}{\Vmin} }} \right) + \frac{1}{1+\frac{b^2}{\Vmin} } \right).
\]

We construct the grid by taking $\gamma_i = c_2^{i-1}\gamma_{\min}$ for $i\in\{1,\dots,k_\gamma\}$, were $k_\gamma = \lceil \ln(\gamma_{max}/\gamma_{\min})/\ln c_2\rceil$. In the experiments we took $c_2=1.05$, and $\delta_1=\delta_2=\delta/2$.

\subsubsection{Minimization of the bound}
\label{app:minimization_cotnd}
The adjustment of the bound in Theorem~\ref{thm:second-mu-bound} to our hypothesis space construction, as described above, is:
\begin{multline*}
L(\MV_\rho) \leq \frac{1}{(0.5-\mu)^2}\Bigg(\E_{\rho^2}[\hat L_\mu(h,h',S_h\cap S_{h'})] + \frac{2\KL(\rho\|\pi) + \ln\frac{2k}{\delta}}{\gamma n}\\
+\frac{\phi(\gamma K_\mu)}{\gamma K_\mu^2}\lr{\frac{\E_{\rho^2}[\hat \V_\mu(h,h',S_h\cap S_{h'})]}{1-\frac{\lambda n}{2(n-1)}} + \frac{K_\mu^2\lr{2\KL(\rho\|\pi)+\ln \frac{2k}{\delta}}}{n\lambda \lr{1-\frac{\lambda n}{2(n-1)}}}}\Bigg),
\end{multline*}
where $n = \min_{h,h'}| S_h\cap S_{h'}|$ and $k= k_\lambda k_\gamma$.
We minimize the bound without considering $k_\gamma$ and $k_\lambda$ since we define the grid without taking them into consideration. However, we put back $k_\gamma$ and $k_\lambda$ when computing the generalization bound. Thus, when doing the optimization we take $k=1$, but when we compute the bound we take the proper $k= k_\lambda k_\gamma$.

\begin{algorithm}[H]
	\caption{Minimization of the bound in Theorem~\ref{thm:second-mu-bound}}
	\label{alg:CCTND}
\begin{algorithmic}
	\STATE {\bfseries Input: }{$n$, grid of $\mu$ and losses $\1[h(X_i)\neq Y_i]$ for all $(X_i,Y_i)\in S_h$ for all $h$ }\\
	\FOR{$\mu$ selected by the binary search in the grid}
	\STATE {\bfseries Initialize: }{$\rho=\pi$}\\
	\STATE Compute $\hat L_\mu(h,h',S_h\cap S_{h'})$ and $\hat\Var_\mu(h,h',S_h\cap S_{h'})$ for all $h,h'$
	\WHILE{The improvement of the bound for a fixed $\mu$ is larger than $10^{-9}$}
		\STATE Compute $\lambda_{\mu,\rho}^*$, the optimal $\lambda$ given $\rho$ and $\mu$ \\
		\STATE Compute $\gamma_{\mu,\rho}^*$, the optimal $\gamma$ given $\rho$ and $\mu$ \\
		%\STATE Compute the bound using $\rho$, $\mu$, $\lambda_{\mu,\rho}^*$ and $\gamma_{\mu,\rho}^*$ \\
		%\STATE Compute new $\mu_\rho^*$, the optimal $\mu$ given $\rho$, $\lambda_\rho^*$ and $\gamma_\rho^*$ \\
		\STATE Apply gradient descent to the bound w.r.t. $\rho$ given $\mu$, $\lambda_{\mu,\rho}^*$ and $\gamma_{\mu,\rho}^*$ \\
		%\STATE Let $\rho=\rho'$ and $\mu=\mu_\rho^*$
	\ENDWHILE
	\STATE Proceed to the next $\mu$ in the grid proposed by the binary search
	\ENDFOR
\end{algorithmic}
\end{algorithm}

\paragraph{Optimal $\lambda$ given $\mu$  and $\rho$}
Given $\mu$ and $\rho$, $\lambda$ can be computed in the same way as in the optimization of Theorem \ref{thm:var-bound}, since the optimization problem is the same, and get
\[
\lambda_{\mu,\rho}^*= \frac{2(n-1)}{n}    \left(\sqrt{\frac{2(n-1)\E_{\rho^2}[\hat\Var_\mu(h,h',S_h\cap S_{h'})]}{K_\mu^2(2\KL(\rho\|\pi)+\ln\frac{2k}{\delta})}+1}+1\right)^{-1}.
\]
In our implementation at every optimization step we took the closest $\lambda$ to the above value from the $\lambda$-grid.
%We recall that although there is a closed-form solution, in case of minimizing over a grid of $\lambda$ values, only the $\lambda$ values from the grid can be taken. Thus, we have to take the closest value in the grid to the optimal $\lambda$ we end up with. In this experiment, we did it every iteration.

\paragraph{Optimal $\gamma$ given $\mu$ and $\rho$}
Given $\mu$ and $\rho$, the bound for the variance is obtained by plugging in the optimal $\lambda_{\mu,\rho}^*$ computed above. Let
\[
U_\V (\rho,\mu)=\frac{\E_{\rho^2}[\hat \V_\mu(h,h',S_h\cap S_{h'})]}{1-\frac{\lambda_{\mu,\rho}^* n}{2(n-1)}} + \frac{K_\mu^2\lr{2\KL(\rho\|\pi)+\ln \frac{2k}{\delta}}}{n\lambda_{\mu,\rho}^* \lr{1-\frac{\lambda_{\mu,\rho}^* n}{2(n-1)}}}.
\]

Then
\[
\gamma_{\mu,\rho}^* = \frac{1}{K_\mu}\lr{ W_0\lr{\frac{1}{e}\lr{\frac{K_\mu^2\lr{2\KL(\rho\|\pi) + \ln \frac{2k}{\delta}}}{n U_\V(\rho,\mu)} - 1}} + 1},
\]
where $W_0$ is the principal branch of the Lambert W function, which is defined as the inverse of the function $f(x) = x e^x$.
In our implementation at every optimization step we took the closest $\gamma$ to the above value from the $\gamma$-grid.

%We recall that although there is a closed-form solution, in case of minimizing over a grid of $\gamma$ values, only the $\gamma$ values from the grid can be taken. Thus, we have to take the closest value in the grid to the optimal $\gamma$ we end up with.  In this experiment, we did it every iteration.

\paragraph{Gradient w.r.t.\ $\rho$ given $\lambda$, $\gamma$, and $\mu$ }
Optimizing the bound w.r.t.\ $\rho$ is equivalent to constrained optimization of $f(\rho) = \E_{\rho^2}[\hat L_\mu(h,h',S')]+a\E_{\rho^2}[\hat V_\mu(h,h',S')]+2b\KL(\rho||\pi)$, where 
\[
a=\frac{\phi(K_\mu \gamma)}{K_\mu^2\gamma }\frac{1}{1-\frac{n\lambda}{2(n-1)}}, \quad b=\frac{1}{\gamma n}+\frac{\phi(K_\mu \gamma)}{K_\mu^2\gamma }\frac{ K_\mu^2 }{n\lambda(1-\frac{n\lambda}{2(n-1)})},
\]
and the constraint is that $\rho$ must be a probability distribution. We optimize $\rho$ in the same way as presented in Appendix~\ref{app:minimization_cmutnd}. We use $\hat L_\mu$ to denote the matrix of empirical $\mu$-tandem losses and $\hat \V_\mu$ to denote the matrix of empirical variance of the $\mu$-tandem losses. Then, the gradient w.r.t.\ $\rho$ is given by:
\begin{align*}
    (\nabla f)_h &= 2\left( \sum_{h'}\rho(h')(\hat L_\mu(h,h',S')+a\hat\Var\mu(h,h',S')) + b\left(1+\ln\frac{\rho(h)}{\pi(h)}\right)\right),\\
    \nabla f &= 2\left(\hat L_\mu\rho +a\hat \V_\mu\rho +b\left(1+\ln\frac{\rho}{\pi}\right)\right).
\end{align*}
We applied gradient descent in the same way as presented in Appendix \ref{app:minimization_cmutnd}.

\section{Experiments}
\label{app:experiments}

\subsection{Data sets}\label{app:datasets}
As mentioned, we considered data sets from the UCI and LibSVM repositories \citep{UCI,libsvm}, as well as Fashion-MNIST (\dataset{Fashion}) from Zalando Research\footnote{\url{https://research.zalando.com/welcome/mission/research-projects/fashion-mnist/}}. We used data sets with size $3000 \leq N \leq 70000$ and dimension $d \leq 1000$. These relatively large data sets were chosen in order to provide meaningful bounds in the standard bagging setting, where individual trees are trained on $n=0.8N$ randomly subsampled points with replacement and the size of the overlap of out-of-bag sets is roughly $n/9$.  
% less data is available, compared to the standard setting with hold-out data considered by for instance \citep{GLL+15}. 
An overview of the data sets is given in Table~\ref{tab:data_sets}.
\begin{table}[t]
    \centering
    \caption{Data set overview. $c_{\min}$ and $c_{\max}$ denote the minimum and maximum class frequency.}
    \label{tab:data_sets}
    \begin{tabular}{lcccccc}\toprule
Data set & $N$ & $d$ & $c$ & $c_{\min}$ & $c_{\max}$ & Source\\
\midrule
\dataset{Adult} & 32561 & 123 & 2 & 0.2408 & 0.7592 & LIBSVM (a1a) \\
\dataset{Cod-RNA} & 59535 & 8 & 2 & 0.3333 & 0.6667 & LIBSVM \\
\dataset{Connect-4} & 67557 & 126 & 3 & 0.0955 & 0.6583 & LIBSVM \\
\dataset{Fashion} & 70000 & 784 & 10 & 0.1000 & 0.1000 & Zalando Research \\
\dataset{Letter} & 20000 & 16 & 26 & 0.0367 & 0.0406 & UCI \\
\dataset{MNIST} & 70000 & 780 & 10 & 0.0902 & 0.1125 & LIBSVM \\
\dataset{Mushroom} & 8124 & 22 & 2 & 0.4820 & 0.5180 & LIBSVM \\
\dataset{Pendigits} & 10992 & 16 & 10 & 0.0960 & 0.1041 & LIBSVM \\
\dataset{Phishing} & 11055 & 68 & 2 & 0.4431 & 0.5569 & LIBSVM \\
\dataset{Protein} & 24387 & 357 & 3 & 0.2153 & 0.4638 & LIBSVM \\
\dataset{SVMGuide1} & 3089 & 4 & 2 & 0.3525 & 0.6475 & LIBSVM \\
\dataset{SatImage} & 6435 & 36 & 6 & 0.0973 & 0.2382 & LIBSVM \\
\dataset{Sensorless} & 58509 & 48 & 11 & 0.0909 & 0.0909 & LIBSVM \\
\dataset{Shuttle} & 58000 & 9 & 7 & 0.0002 & 0.7860 & LIBSVM \\
\dataset{Splice} & 3175 & 60 & 2 & 0.4809 & 0.5191 & LIBSVM \\
\dataset{USPS} & 9298 & 256 & 10 & 0.0761 & 0.1670 & LIBSVM \\
\dataset{w1a} & 49749 & 300 & 2 & 0.0297 & 0.9703 & LIBSVM \\
\bottomrule
\end{tabular}

\end{table}

For all experiments, we removed patterns with missing entries and made a stratified split of the data set.
For data sets with a training and a test set (\dataset{SVMGuide1},
\dataset{Splice},
\dataset{Adult},
\dataset{w1a},
\dataset{MNIST},
\dataset{Shuttle},
\dataset{Pendigits},
\dataset{Protein},
\dataset{SatImage},
\dataset{USPS})
we combined the training and test sets and shuffled the entire set before splitting.

\subsection{Optimized weighted random forest}
\label{app:experiments:RFopt}

\subsubsection*{Experimental Setting}
This section describes in detail the settings and the results of the empirical evaluation using random forest (RF) majority vote classifiers.

We construct the ensemble from decision trees available in \textit{scikit-learn}. For each data set, an ensemble of 100 trees is trained using bagging (as described in Section~\ref{sec:exper}). For each tree, the Gini criterion is used for splitting and $\sqrt{d}$ features are considered in each split.

We compare the RF using the default uniform weighting $\rho_u$ and the optimized weighting obtained by $\FO$ \citep{TIS16}, $\TND$ \citep{MLIS20}, $\CMUTND$ (Theorem~\ref{thm:first-mu-bound}) and $\COTND$ (Theorem~\ref{thm:second-mu-bound}). Optimization is based on the out-of-bag sets (see Section~\ref{sec:exper}). For each optimized RF, we also compute the optimized bound.

\subsubsection*{Numerical Results}
This section lists the numerical results for the empirical evaluation using RF.
\begin{table}[t]
    \centering
    \caption{Numerical values of the test loss obtained by the RFs with optimized weighting. The smallest loss is highlighted in \textbf{bold}, while the smallest optimized loss is \underline{underlined}.}
    \label{tab:RFopt:risks}
    \begin{adjustbox}{width=\columnwidth,center}
    \begin{tabular}{lccccc}\toprule
Data set & $L(\MV_{u})$ & $L(\MV_{\rho_\lambda})$ & $L(\MV_{\rho_{\TND}})$ & $L(\MV_{\rho_{\CCTND}})$ & $L(\MV_{\rho_{\CCPBB}})$ \\
\midrule
\dataset{SVMGuide1} & \textbf{0.0284 (0.0037)} & 0.0372 (0.0066) & 0.0287 (0.0035) & \underline{0.0286 (0.0036)} & 0.0287 (0.0039) \\
\dataset{Phishing} & \textbf{0.0292 (0.004)} & 0.0371 (0.0073) & \underline{\textbf{0.0292 (0.0036)}} & \underline{\textbf{0.0292 (0.0036)}} & \underline{\textbf{0.0292 (0.004)}} \\
\dataset{Mushroom} & \textbf{0.0 (0.0)} & \underline{\textbf{0.0 (0.0)}} & \underline{\textbf{0.0 (0.0)}} & \underline{\textbf{0.0 (0.0)}} & \underline{\textbf{0.0 (0.0)}} \\
\dataset{Splice} & \textbf{0.0299 (0.009)} & 0.1087 (0.021) & 0.0306 (0.0099) & 0.0309 (0.0092) & \underline{0.0302 (0.01)} \\
\dataset{w1a} & 0.0108 (0.0007) & 0.016 (0.0025) & 0.0108 (0.0006) & \underline{\textbf{0.0107 (0.0006)}} & 0.0108 (0.0006) \\
\dataset{Cod-RNA} & 0.0402 (0.0013) & 0.0712 (0.0064) & \underline{\textbf{0.0395 (0.0014)}} & \underline{\textbf{0.0395 (0.0014)}} & \underline{\textbf{0.0395 (0.0015)}} \\
\dataset{Adult} & \textbf{0.1693 (0.0027)} & 0.1942 (0.0151) & \underline{0.1698 (0.0031)} & 0.1701 (0.003) & \underline{0.1698 (0.0031)} \\
\dataset{Connect-4} & 0.1706 (0.0023) & 0.2803 (0.0165) & 0.1699 (0.002) & 0.1705 (0.0024) & \underline{\textbf{0.1695 (0.0019)}} \\
\dataset{Shuttle} & \textbf{0.0002 (0.0001)} & 0.0003 (0.0002) & \underline{\textbf{0.0002 (0.0001)}} & \underline{\textbf{0.0002 (0.0001)}} & \underline{\textbf{0.0002 (0.0001)}} \\
\dataset{Pendigits} & 0.0096 (0.0023) & 0.0452 (0.0124) & \underline{\textbf{0.0092 (0.0022)}} & 0.0093 (0.0021) & \underline{\textbf{0.0092 (0.0025)}} \\
\dataset{Letter} & \textbf{0.0378 (0.0036)} & 0.1408 (0.0356) & 0.0398 (0.0041) & 0.0402 (0.0042) & \underline{0.0383 (0.0034)} \\
\dataset{SatImage} & \textbf{0.0828 (0.0068)} & 0.1321 (0.0268) & 0.0835 (0.0061) & 0.0839 (0.0062) & \underline{0.0832 (0.006)} \\
\dataset{Sensorless} & 0.0014 (0.0004) & 0.0138 (0.0019) & \underline{\textbf{0.0012 (0.0003)}} & \underline{\textbf{0.0012 (0.0003)}} & \underline{\textbf{0.0012 (0.0003)}} \\
\dataset{USPS} & \textbf{0.0394 (0.0043)} & 0.1325 (0.0251) & \underline{0.0401 (0.0055)} & 0.0405 (0.0052) & 0.0404 (0.005) \\
\dataset{MNIST} & \textbf{0.0316 (0.0017)} & 0.16 (0.0352) & 0.0323 (0.0017) & 0.0324 (0.0017) & \underline{0.0317 (0.0014)} \\
\dataset{Fashion} & \textbf{0.1175 (0.0018)} & 0.2122 (0.0299) & 0.1192 (0.0022) & 0.1197 (0.0022) & \underline{0.1178 (0.0021)} \\
\bottomrule
\end{tabular}

    \end{adjustbox}
\end{table}
\begin{table}[t]
    \centering
    \caption{Numerical values of the bounds for the RFs with optimized weighting. The tightest bound is highlighted in \textbf{bold}, while the tightest second-order bound is \underline{underlined}.}
    \label{tab:RFopt:bounds}
    \begin{tabular}{lcccc}\toprule
Data set & $\FO(\rho_\lambda)$ & $\TND(\rho_{\TND})$ & $\CCTND(\rho_{\CCTND})$ & $\CCPBB(\rho_{\CCPBB})$ \\
\midrule
\dataset{SVMGuide1} & \textbf{0.1079 (0.0079)} & \underline{0.1836 (0.0062)} & 0.1853 (0.0059) & 0.2806 (0.0071) \\
\dataset{Phishing} & \textbf{0.1189 (0.0035)} & \underline{0.1642 (0.0043)} & 0.1674 (0.0042) & 0.2336 (0.005) \\
\dataset{Mushroom} & \textbf{0.0068 (0.0001)} & \underline{0.0353 (0.0002)} & 0.0388 (0.0002) & 0.1121 (0.0006) \\
\dataset{Splice} & \textbf{0.3245 (0.0218)} & \underline{0.4077 (0.0062)} & 0.4247 (0.0065) & 0.6562 (0.0056) \\
\dataset{w1a} & \textbf{0.0424 (0.0015)} & \underline{0.0633 (0.0009)} & 0.0642 (0.0009) & 0.0805 (0.0011) \\
\dataset{Cod-RNA} & \textbf{0.1629 (0.0018)} & \underline{0.1663 (0.0014)} & 0.1698 (0.0014) & 0.19 (0.0018) \\
\dataset{Adult} & \textbf{0.4388 (0.0042)} & 0.5701 (0.0051) & \underline{0.5508 (0.004)} & 0.5976 (0.0042) \\
\dataset{Connect-4} & \textbf{0.5978 (0.0067)} & 0.6831 (0.0039) & \underline{0.6758 (0.0036)} & 0.7112 (0.0038) \\
\dataset{Shuttle} & \textbf{0.0026 (0.0002)} & \underline{0.0078 (0.0002)} & 0.0083 (0.0002) & 0.018 (0.0003) \\
\dataset{Pendigits} & \textbf{0.142 (0.0035)} & \underline{0.1445 (0.0026)} & 0.1504 (0.0042) & 0.2155 (0.003) \\
\dataset{Letter} & \textbf{0.3858 (0.0067)} & \underline{0.4504 (0.0032)} & 0.4513 (0.003) & 0.5134 (0.0039) \\
\dataset{SatImage} & \textbf{0.3762 (0.0075)} & 0.4902 (0.0079) & \underline{0.4851 (0.007)} & 0.6158 (0.0083) \\
\dataset{Sensorless} & 0.0348 (0.0031) & \underline{\textbf{0.0257 (0.0006)}} & 0.0265 (0.0006) & 0.0376 (0.0007) \\
\dataset{USPS} & \textbf{0.3394 (0.0065)} & \underline{0.4059 (0.0048)} & 0.4097 (0.0044) & 0.5086 (0.0042) \\
\dataset{MNIST} & 0.3795 (0.0031) & \underline{\textbf{0.3537 (0.0014)}} & 0.3598 (0.0014) & 0.3853 (0.0014) \\
\dataset{Fashion} & \textbf{0.4806 (0.003)} & 0.5436 (0.0023) & \underline{0.5408 (0.0021)} & 0.5728 (0.0021) \\
\bottomrule
\end{tabular}

\end{table}
\begin{table}[t]
    \centering
    \scriptsize
    \caption{Numerical values for Gibbs loss, tandem loss and optimized $\mu$ for the RFs with optimized weighting. We use $\E_\rho[L]$ and $\E_{\rho^2}[L]$ as short-hands for the Gibbs and the tandem loss respectively.}
    \label{tab:RFopt:values}
    \begin{tabular}{lcccccccccc}\toprule
 & \multicolumn{2}{|c|}{$\FO$} & \multicolumn{2}{|c|}{$\TND$} & \multicolumn{3}{|c|}{$\CCTND$} & \multicolumn{3}{|c|}{$\CCPBB$} \\
Data set & \multicolumn{1}{|c}{$\E_\rho[L]$} & \multicolumn{1}{c}{$\E_{\rho^2}[L]$} & \multicolumn{1}{|c}{$\E_\rho[L]$} & \multicolumn{1}{c}{$\E_{\rho^2}[L]$} & \multicolumn{1}{|c}{$\E_\rho[L]$} & \multicolumn{1}{c}{$\E_{\rho^2}[L]$} & \multicolumn{1}{c}{$\mu$} & \multicolumn{1}{|c}{$\E_\rho[L]$} & \multicolumn{1}{c}{$\E_{\rho^2}[L]$} & \multicolumn{1}{c|}{$\mu$} \\
\midrule
\dataset{SVMGuide1} & 0.0325 & 0.0217 & 0.0406 & 0.0185 & 0.0403 & 0.0184 & -0.0527 & 0.0413 & 0.0194 & -0.0258 \\
\dataset{Phishing} & 0.041 & 0.0255 & 0.0486 & 0.0197 & 0.0484 & 0.0196 & -0.0295 & 0.049 & 0.0202 & -0.0125 \\
\dataset{Mushroom} & 0.0 & 0.0 & 0.0002 & 0.0 & 0.0002 & 0.0 & -0.0317 & 0.0002 & 0.0 & -0.01 \\
\dataset{Splice} & 0.1068 & 0.0903 & 0.1564 & 0.0424 & 0.1522 & 0.0415 & -0.057 & 0.16 & 0.044 & 0.0045 \\
\dataset{w1a} & 0.0156 & 0.0123 & 0.0179 & 0.0091 & 0.0179 & 0.009 & -0.0111 & 0.018 & 0.0092 & -0.0065 \\
\dataset{Cod-RNA} & 0.0712 & 0.0602 & 0.0802 & 0.0314 & 0.0803 & 0.0314 & -0.0178 & 0.0815 & 0.0318 & 0.0102 \\
\dataset{Adult} & 0.1995 & 0.1474 & 0.2061 & 0.1184 & 0.2056 & 0.1182 & -0.1216 & 0.2068 & 0.1194 & -0.0918 \\
\dataset{Connect-4} & 0.2824 & 0.2564 & 0.2953 & 0.1523 & 0.2943 & 0.1521 & -0.0959 & 0.2974 & 0.1535 & -0.0615 \\
\dataset{Shuttle} & 0.0003 & 0.0001 & 0.0006 & 0.0002 & 0.0006 & 0.0002 & -0.0044 & 0.0006 & 0.0002 & 0.0 \\
\dataset{Pendigits} & 0.0502 & 0.0346 & 0.061 & 0.0163 & 0.0609 & 0.0163 & -0.0099 & 0.0614 & 0.0166 & 0.0092 \\
\dataset{Letter} & 0.1685 & 0.1249 & 0.1803 & 0.0851 & 0.1797 & 0.0849 & -0.0501 & 0.1816 & 0.0861 & -0.0228 \\
\dataset{SatImage} & 0.1478 & 0.0968 & 0.1612 & 0.0746 & 0.1602 & 0.0741 & -0.1104 & 0.1617 & 0.0755 & -0.0535 \\
\dataset{Sensorless} & 0.0125 & 0.0113 & 0.0192 & 0.0027 & 0.0192 & 0.0027 & 0.0008 & 0.0195 & 0.0027 & 0.01 \\
\dataset{USPS} & 0.1363 & 0.0989 & 0.1517 & 0.0644 & 0.1509 & 0.0641 & -0.053 & 0.1522 & 0.065 & -0.0173 \\
\dataset{MNIST} & 0.1763 & 0.1286 & 0.1837 & 0.075 & 0.1835 & 0.075 & 0.0281 & 0.185 & 0.0756 & 0.037 \\
\dataset{Fashion} & 0.2256 & 0.1715 & 0.2325 & 0.1196 & 0.2322 & 0.1195 & -0.0577 & 0.2334 & 0.1203 & -0.0382 \\
\bottomrule
\end{tabular}

\end{table}
Table~\ref{tab:RFopt:risks} provides the numerical values of the test loss obtained by the RFs with uniform weighting and with weighting optimized by $\FO$, $\TND$, $\CMUTND$ and $\COTND$; a visual presentation is given in Figure~\ref{fig:RFopt:risks}. As observed by \cite{MLIS20}, optimization using $\FO$ leads to overfitting, while the second-order bounds does not significantly degrade the performance. Among the second-order bounds, optimizing using $\COTND$ produces the best classifier in most cases.

Table~\ref{tab:RFopt:bounds} provides the numerical values of the optimized bounds; a visual presentation is given in Figure~\ref{fig:RFopt:bounds}.
Table~\ref{tab:RFopt:values} provides the recorded Gibbs loss and tandem loss using the optimized $\rho$. The optimal $\mu$ found is reported for $\CMUTND$ and $\COTND$ as well.

\subsection{Ensemble of multiple heterogeneous classifiers}
\label{app:experiments:heterogeneous}

\subsubsection*{Experimental Setting}
This section describes in detail the settings and the results of the experimental evaluation using an ensemble of multiple heterogeneous classifiers. 

The ensemble is defined by a set of standard classifiers available in \textit{scikit-learn}:
\begin{itemize}
    \item \textbf{Linear Discriminant Analysis}, with default parameters, which includes a singular value decomposition solver.  
    \item Three versions of \textbf{k-Nearest Neighbors}: (i) k=3 and uniform weights (i.e., all points in each neighborhood are weighted equally); (ii) k=5 and uniform weights; and (iii) k=5 where points are weighted by the inverse of their distance. In all cases, it is employed the Euclidean distance. 
    \item \textbf{Decision Tree}, with default parameters, which includes Gini criterion for splitting and no maximum depth. 
    \item \textbf{Logistic Regression}, with default parameters, which includes L2 penalization. 
    \item \textbf{Gaussian Naive Bayes}, with default parameters.  
\end{itemize}

We included three versions of the kNN classifier to test if our bounds could deal with a heterogeneous set of classifiers where some of them are expected to provide highly correlated errors while others are expected to provide much less correlated errors. 

Each of the seven classifiers of the ensemble was learned from a bootstrap sample of the training data set. We did it in the way to be able to compute and optimize our bounds with the out-of-bag-samples as described in Section~\ref{sec:exper}. 

\subsubsection*{Numerical Results}

This section lists the numerical results for the empirical evaluation using ensembles of multiple heterogeneous classifiers.

Table~\ref{tab:mce:risks} provides the numerical values of the test loss obtained by these ensembles with uniform weighting and with weighting optimized by $\FO$, $\TND$, $\CMUTND$ and $\COTND$; a visual presentation is given in Figure~\ref{fig:mce:risks}. In this case, uniform voting is not a competitive weighting scheme. The second-order bounds perform much better than uniform weighting and than the weights computed according to the first-order bound. There is not any clear winner among the second-order bounds. 

Table~\ref{tab:mce:bounds} provides the numerical values of the optimized bounds; a visual presentation is given in Figure~\ref{fig:mce:bounds}. Among the second-order bounds, the CCTND bound is often tighter in this setting.

Table~\ref{tab:mce:values} provides the recorded Gibbs loss and tandem loss using the optimized $\rho$. The optimal $\mu$ found is reported for $\CMUTND$ and $\COTND$ as well.

\begin{table}[t]
    \centering
    \caption{Numerical values of the test loss obtained by ensembles of multiple heterogeneous classifiers with optimized weighting. The smallest loss is highlighted in \textbf{bold}, while the smallest optimized loss is \underline{underlined}.}
    \label{tab:mce:risks}
\begin{adjustbox}{width=\columnwidth,center}
    \begin{tabular}{lcccccc}\toprule
Data set & $L(\MV_{u})$ & $L(h_{best})$ & $L(\MV_{\rho_\lambda})$ & $L(\MV_{\rho_{\TND}})$ & $L(\MV_{\rho_{\CCTND}})$ & $L(\MV_{\rho_{\CCPBB}})$ \\
\midrule
\dataset{SVMGuide1} & 0.0357 (0.005) & 0.0404 (0.0047) & 0.0404 (0.0047) & 0.0352 (0.0051) & 0.0348 (0.0053) & \underline{\textbf{0.0343 (0.0059)}} \\
\dataset{Phishing} & 0.0353 (0.0035) & 0.0459 (0.0058) & 0.0459 (0.0058) & \underline{\textbf{0.0333 (0.0031)}} & 0.0337 (0.0028) & 0.0335 (0.0032) \\
\dataset{Mushroom} & 0.0001 (0.0002) & 0.0002 (0.0004) & \underline{\textbf{0.0 (0.0)}} & 0.0001 (0.0002) & 0.0001 (0.0002) & 0.0001 (0.0002) \\
\dataset{Splice} & 0.1055 (0.0104) & 0.0768 (0.0098) & 0.0768 (0.0098) & 0.075 (0.0093) & 0.0768 (0.0098) & \underline{\textbf{0.069 (0.0082)}} \\
\dataset{w1a} & \textbf{0.0125 (0.0007)} & 0.0153 (0.0009) & 0.0153 (0.0009) & \underline{0.0128 (0.0008)} & 0.0129 (0.0008) & \underline{0.0128 (0.0007)} \\
\dataset{Cod-RNA} & 0.0707 (0.0022) & 0.064 (0.0022) & 0.064 (0.0022) & 0.0552 (0.002) & \underline{\textbf{0.0551 (0.0019)}} & 0.0581 (0.0023) \\
\dataset{Adult} & 0.1627 (0.0036) & 0.1543 (0.0039) & 0.1543 (0.0039) & 0.1563 (0.0042) & \underline{\textbf{0.1541 (0.0039)}} & 0.1566 (0.0048) \\
\dataset{Protein} & 0.3491 (0.0066) & 0.3251 (0.0061) & 0.3251 (0.0061) & \underline{\textbf{0.3176 (0.0052)}} & 0.3251 (0.0061) & 0.3185 (0.0048) \\
\dataset{Connect-4} & 0.2039 (0.0035) & 0.2433 (0.0032) & 0.2433 (0.0032) & \underline{\textbf{0.1989 (0.003)}} & 0.1992 (0.0032) & 0.2018 (0.0037) \\
\dataset{Shuttle} & 0.0012 (0.0002) & \textbf{0.0005 (0.0002)} & \underline{\textbf{0.0005 (0.0002)}} & 0.0006 (0.0002) & 0.0006 (0.0002) & 0.0006 (0.0002) \\
\dataset{Pendigits} & 0.0111 (0.0016) & 0.0092 (0.0017) & 0.0092 (0.0017) & 0.0086 (0.0016) & 0.0087 (0.0016) & \underline{\textbf{0.0085 (0.0019)}} \\
\dataset{Letter} & 0.069 (0.0041) & 0.0673 (0.0052) & 0.0673 (0.0052) & 0.0538 (0.0043) & 0.054 (0.0043) & \underline{\textbf{0.0526 (0.0041)}} \\
\dataset{SatImage} & 0.0997 (0.0069) & 0.1054 (0.0046) & 0.1053 (0.0046) & 0.0939 (0.0061) & 0.0954 (0.0063) & \underline{\textbf{0.093 (0.0059)}} \\
\dataset{Sensorless} & 0.1816 (0.0121) & \textbf{0.0213 (0.0018)} & \underline{\textbf{0.0213 (0.0018)}} & \underline{\textbf{0.0213 (0.0018)}} & 0.1089 (0.2764) & \underline{\textbf{0.0213 (0.0018)}} \\
\dataset{USPS} & 0.0359 (0.0054) & 0.0375 (0.0038) & 0.0375 (0.0038) & \underline{\textbf{0.0324 (0.0044)}} & 0.0326 (0.0042) & 0.0326 (0.0036) \\
\dataset{MNIST} & 0.0356 (0.002) & 0.0349 (0.0017) & 0.0349 (0.0017) & \underline{\textbf{0.0304 (0.0016)}} & \underline{\textbf{0.0304 (0.0017)}} & \underline{\textbf{0.0304 (0.0016)}} \\
\dataset{Fashion} & 0.1341 (0.0019) & 0.154 (0.0028) & 0.154 (0.0028) & \underline{\textbf{0.1323 (0.003)}} & 0.1341 (0.003) & 0.1346 (0.0034) \\
\bottomrule
\end{tabular}

\end{adjustbox}
\end{table}
\begin{table}[t]
    \centering
    \caption{Numerical values of the bounds for ensembles of multiple heterogeneous classifiers with optimized weighting. The tightest bound is highlighted in \textbf{bold}, while the tightest second-order bound is \underline{underlined}.}
    \label{tab:mce:bounds}
    \begin{tabular}{lcccc}\toprule
Data set & $\FO(\rho_\lambda)$ & $\TND(\rho_{\TND})$ & $\CCTND(\rho_{\CCTND})$ & $\CCPBB(\rho_{\CCPBB})$ \\
\midrule
\dataset{SVMGuide1} & \textbf{0.1133 (0.0053)} & 0.221 (0.0127) & \underline{0.2183 (0.0112)} & 0.3142 (0.0116) \\
\dataset{Phishing} & \textbf{0.1242 (0.0056)} & \underline{0.1957 (0.0075)} & 0.1977 (0.0072) & 0.2658 (0.0074) \\
\dataset{Mushroom} & \textbf{0.0078 (0.0008)} & \underline{0.0412 (0.0019)} & 0.0441 (0.0019) & 0.1162 (0.0026) \\
\dataset{Splice} & \textbf{0.2361 (0.0186)} & 0.4772 (0.0286) & \underline{0.4613 (0.0242)} & 0.6769 (0.0288) \\
\dataset{w1a} & \textbf{0.0392 (0.0015)} & \underline{0.0694 (0.0021)} & 0.0703 (0.0021) & 0.0879 (0.0022) \\
\dataset{Cod-RNA} & \textbf{0.1448 (0.0026)} & 0.2164 (0.0032) & \underline{0.2148 (0.0031)} & 0.2445 (0.003) \\
\dataset{Adult} & \textbf{0.3343 (0.0071)} & 0.5648 (0.0077) & \underline{0.5366 (0.0066)} & 0.5857 (0.0064) \\
\dataset{Protein} & \textbf{0.6944 (0.0057)} & 1.0 (0.0) & \underline{0.9078 (0.0034)} & 1.0 (0.0) \\
\dataset{Connect-4} & \textbf{0.5157 (0.0047)} & 0.7272 (0.0099) & \underline{0.6733 (0.0068)} & 0.7107 (0.0064) \\
\dataset{Shuttle} & \textbf{0.0033 (0.0008)} & \underline{0.0106 (0.0012)} & 0.0111 (0.0012) & 0.0215 (0.0011) \\
\dataset{Pendigits} & \textbf{0.0335 (0.0033)} & \underline{0.0838 (0.0062)} & 0.0856 (0.0061) & 0.1412 (0.0067) \\
\dataset{Letter} & \textbf{0.1591 (0.0053)} & 0.2682 (0.0092) & \underline{0.2627 (0.0084)} & 0.3154 (0.0099) \\
\dataset{SatImage} & \textbf{0.271 (0.0146)} & 0.4908 (0.0123) & \underline{0.4593 (0.011)} & 0.5857 (0.0122) \\
\dataset{Sensorless} & \textbf{0.0523 (0.0031)} & \underline{0.1173 (0.0057)} & 0.2054 (0.2793) & 0.1357 (0.0064) \\
\dataset{USPS} & \textbf{0.1069 (0.0053)} & 0.2183 (0.0074) & \underline{0.2142 (0.0066)} & 0.2932 (0.0084) \\
\dataset{MNIST} & \textbf{0.081 (0.0022)} & 0.139 (0.0049) & \underline{0.1383 (0.0046)} & 0.1574 (0.0051) \\
\dataset{Fashion} & \textbf{0.3291 (0.0033)} & 0.4945 (0.0066) & \underline{0.4709 (0.0049)} & 0.5049 (0.0045) \\
\bottomrule
\end{tabular}

\end{table}
\begin{table}[t]
    \centering
    \scriptsize
    \caption{Numerical values for Gibbs loss, tandem loss and optimized $\mu$ for the heterogeneous classifiers with optimized weighting. We use $\E_\rho[L]$ and $\E_{\rho^2}[L]$ as short-hands for the Gibbs loss and the tandem loss respectively.}
    \label{tab:mce:values}
    \begin{tabular}{lcccccccccc}\toprule
 & \multicolumn{2}{|c|}{$\FO$} & \multicolumn{2}{|c|}{$\TND$} & \multicolumn{3}{|c|}{$\CCTND$} & \multicolumn{3}{|c|}{$\CCPBB$} \\
Data set & \multicolumn{1}{|c}{$\E_\rho[L]$} & \multicolumn{1}{c}{$\E_{\rho^2}[L]$} & \multicolumn{1}{|c}{$\E_\rho[L]$} & \multicolumn{1}{c}{$\E_{\rho^2}[L]$} & \multicolumn{1}{|c}{$\E_\rho[L]$} & \multicolumn{1}{c}{$\E_{\rho^2}[L]$} & \multicolumn{1}{c}{$\mu$} & \multicolumn{1}{|c}{$\E_\rho[L]$} & \multicolumn{1}{c}{$\E_{\rho^2}[L]$} & \multicolumn{1}{c|}{$\mu$} \\
\midrule
\dataset{SVMGuide1} & 0.0365 & 0.031 & 0.0457 & 0.026 & 0.0439 & 0.0258 & -0.0691 & 0.046 & 0.0267 & -0.0362 \\
\dataset{Phishing} & 0.0447 & 0.0383 & 0.059 & 0.0262 & 0.0533 & 0.0262 & -0.0419 & 0.059 & 0.0266 & -0.0158 \\
\dataset{Mushroom} & 0.0001 & 0.0 & 0.0041 & 0.0003 & 0.0029 & 0.0002 & -0.0289 & 0.0046 & 0.0003 & -0.008 \\
\dataset{Splice} & 0.0754 & 0.0754 & 0.1265 & 0.0552 & 0.1062 & 0.0548 & -0.19 & 0.1331 & 0.0578 & -0.0672 \\
\dataset{w1a} & 0.0147 & 0.0134 & 0.0204 & 0.0106 & 0.0195 & 0.0106 & -0.0125 & 0.0192 & 0.0108 & -0.0065 \\
\dataset{Cod-RNA} & 0.0638 & 0.0572 & 0.0737 & 0.0425 & 0.0717 & 0.0425 & -0.0356 & 0.0813 & 0.0442 & -0.02 \\
\dataset{Adult} & 0.1502 & 0.1484 & 0.2049 & 0.1181 & 0.1726 & 0.1224 & -0.1771 & 0.1907 & 0.1202 & -0.1168 \\
\dataset{Protein} & 0.3224 & 0.3153 & 0.4052 & 0.2645 & 0.324 & 0.3017 & -1.2391 & 0.4182 & 0.267 & -0.5 \\
\dataset{Connect-4} & 0.2438 & 0.236 & 0.2612 & 0.1634 & 0.2534 & 0.1638 & -0.221 & 0.2628 & 0.165 & -0.1912 \\
\dataset{Shuttle} & 0.0005 & 0.0004 & 0.0019 & 0.0004 & 0.0014 & 0.0004 & -0.0047 & 0.0021 & 0.0004 & 0.0 \\
\dataset{Pendigits} & 0.008 & 0.0069 & 0.0156 & 0.0062 & 0.0133 & 0.0061 & -0.0299 & 0.0144 & 0.0063 & -0.0118 \\
\dataset{Letter} & 0.0644 & 0.0587 & 0.0768 & 0.0454 & 0.0723 & 0.0454 & -0.0641 & 0.086 & 0.0466 & -0.0407 \\
\dataset{SatImage} & 0.1032 & 0.0928 & 0.1246 & 0.0766 & 0.1188 & 0.0764 & -0.1647 & 0.1284 & 0.0783 & -0.0995 \\
\dataset{Sensorless} & 0.0208 & 0.0208 & 0.0345 & 0.0198 & 0.1161 & 0.1036 & -0.027 & 0.0445 & 0.0202 & -0.0122 \\
\dataset{USPS} & 0.036 & 0.0326 & 0.0478 & 0.028 & 0.0434 & 0.0278 & -0.0676 & 0.0496 & 0.0288 & -0.0357 \\
\dataset{MNIST} & 0.0345 & 0.0304 & 0.0428 & 0.026 & 0.0403 & 0.026 & -0.0272 & 0.0482 & 0.0265 & -0.017 \\
\dataset{Fashion} & 0.1528 & 0.1488 & 0.1665 & 0.1081 & 0.1636 & 0.1084 & -0.1173 & 0.1724 & 0.1092 & -0.094 \\
\bottomrule
\end{tabular}

\end{table}

\end{document}